\def\eqref#1{equation~\ref{#1}}
\def\1{\bm{1}}
\DeclareMathAlphabet{\mathsfit}{\encodingdefault}{\sfdefault}{m}{sl}
\SetMathAlphabet{\mathsfit}{bold}{\encodingdefault}{\sfdefault}{bx}{n}
\newcommand{\smallsim}{\smallsym{\mathrel}{\sim}}
\newcommand{\smallsym}[2]{#1{\mathpalette\make@small@sym{#2}}}
\newcommand{\make@small@sym}[2]{%
  \vcenter{\hbox{$\m@th\downgrade@style#1#2$}}%
}
\newcommand{\downgrade@style}[1]{%
  \ifx#1\displaystyle\scriptstyle\else
    \ifx#1\textstyle\scriptstyle\else
      \scriptscriptstyle
  \fi\fi
}
\crefname{defn}{Def.}{Def.}
\crefname{section}{Sec.}{Sec.}
\crefname{algorithm}{Alg.}{Alg.} 
\crefname{thm}{Thm.}{Thm.}
\crefname{lem}{Lem.}{Lem.}
\crefname{prop}{Prop.}{Prop.}
\crefname{asm}{Asm.}{Asm.}
\crefname{appendix}{Appx.}{Appx.}
\newcommand\NoDo{\renewcommand\algorithmicdo{}}
\newcommand\ReDo{\renewcommand\algorithmicdo{\textbf{do}}}
\newcommand\ForEach{\renewcommand\algorithmicfor{\textbf{foreach}}}
\newcommand\ForOnly{\renewcommand\algorithmicfor{\textbf{for}}}
\title{Lossless Compression with \\ Probabilistic Circuits}
\author{
  \hspace{-0.5em} Anji Liu \\
  CS Department \\
  UCLA \\
  \texttt{liuanji@cs.ucla.edu} \\
  \And
  Stephan Mandt \\
  CS Department \\
  University of California, Irvine \\
  \texttt{mandt@uci.edu} \\
  \And
  Guy Van den Broeck \\
  CS Department \\
  UCLA \\
  \texttt{guyvdb@cs.ucla.edu} \\
}
\begin{document}

\maketitle

\begin{abstract}
Despite extensive progress on image generation, common deep generative model architectures are not easily applied to lossless compression. For example, VAEs suffer from a compression cost overhead due to their latent variables. This overhead can only be partially eliminated with elaborate schemes such as bits-back coding, often resulting in poor single-sample compression rates. To overcome such problems, we establish a new class of tractable lossless compression models that permit efficient encoding and decoding: Probabilistic Circuits (PCs). These are a class of neural networks involving $|p|$ computational units that support efficient marginalization over arbitrary subsets of the $D$ feature dimensions, enabling efficient arithmetic coding. We derive efficient encoding and decoding schemes that both have time complexity $\mathcal{O} (\log(D) \cdot |p|)$, where a naive scheme would have linear costs in $D$ and $|p|$, making the approach highly scalable. Empirically, our PC-based (de)compression algorithm runs 5-40 times faster than neural compression algorithms that achieve similar bitrates. By scaling up the traditional PC structure learning pipeline, we achieve state-of-the-art results on image datasets such as MNIST. Furthermore, PCs can be naturally integrated with existing neural compression algorithms to improve the performance of these base models on natural image datasets. Our results highlight the potential impact that non-standard learning architectures may have on neural data compression.
\end{abstract}

\section{Introduction}
\label{sec:intro}

Thanks to their expressiveness, modern Deep Generative Models (DGMs) such as Flow-based models \citep{dinh2014nice}, Variational Autoencoders (VAEs) \citep{kingma2013auto}, and Generative Adversarial Networks (GANs) \citep{goodfellow2014generative} achieved state-of-the-art results on generative tasks such as creating high-quality samples \citep{vahdat2020nvae} and learning low-dimensional representation of data \citep{zheng2019disentangling}. However, these successes have not been fully transferred into neural lossless compression; see \citep{yang2022introduction} for a recent survey.
Specifically, GANs cannot be used for lossless compression due to their inability to assign likelihoods to observations. Latent variable models such as VAEs rely on rate estimates obtained by lower-bounding the likelihood of the data, i.e., the quantity which is theoretically optimal for lossless compression; they furthermore rely on sophisticated schemes such as bits-back coding \citep{hinton1993keeping} to realize these rates, oftentimes resulting in poor single-sample compression ratios \citep{kingma2019bit}.
%Specifically, GANs cannot be used for lossless compression since it is intractable to compute the likelihoods $\p(\x)$ given samples $\x$. Moreover, although we can compute a lower bound of the likelihood (\ie the evidence lower bound) for VAEs, this requires knowing the sample's latent representation in advance. Therefore, practical algorithms need to resort to techniques such as bits-back coding \citep{hinton1993keeping}, which result in poor single-sample compression ratio \citep{kingma2019bit}.

Therefore, good generative performance does not imply good compression performance for lossless compression, as the model needs to support efficient algorithms to encode and decode close to the model's theoretical rate estimate. While both Flow- and VAE-based compression algorithms \citep{hoogeboom2019integer,kingma2019bit} support efficient and near-optimal compression under certain assumptions (\eg the existence of an additional source of random bits), we show that Probabilistic Circuits (PCs) \citep{choiprobabilistic} are also suitable for lossless compression tasks. This class of \emph{tractable} models has a particular structure that allows efficient marginalization of its random variables--a property that, as we show, enables efficient conditional entropy coding. Therefore, we introduce PCs as backbone models and develop (de)compression algorithms that achieve high compression ratios and high computational efficiency.
% high model expressiveness alone may not guarantee good performance for specific downstream tasks. Thus, we should find the class of models with \emph{good balance between expressiveness and tractability}. Specifically, as we will show, efficient computation of marginal probabilities is highly advantageous when performing lossless compression. Therefore, we propose to use Probabilistic Circuits (PCs) \citep{choiprobabilistic}, a class of neural networks that support efficient marginalization, as backbone models and develop (de)compression algorithms that achieve both high compression ratios and high computational efficiency.

Similar to other neural compression methods, the proposed lossless compression approach operates in two main phases --- (i) learn good PC models that approximate the data distribution, and (ii) compress and decompress samples $\x$ with computationally efficient algorithms. The proposed lossless compression algorithm has four main contributions:

\emph{A new class of entropy models.} This is the first paper that uses PCs for data compression. In contrast to other neural compression algorithm, we leverage recent innovations in PCs to automatically learn good model architectures from data. With customized GPU implementations and better training pipelines, we are the first to train PC models with competitive performance compared to deep learning models on datasets such as raw MNIST.

\emph{A new coding scheme.} We developed a provably efficient (\cref{thm:st-dec-pc-compress}) lossless compression algorithm for PCs that take advantage of their ability to efficiently compute arbitrary marginal probabilities. Specifically, we first show which kinds of marginal probabilities are required for (de)compression. The proposed algorithm combines an inference algorithm that computes these marginals efficiently given a learned PC and SoTA streaming codes that use the marginals for en- and decoding.

\emph{Competitive compression rates.} Our experiments show that on MNIST and EMNIST, the PC-based compression algorithm achieved SoTA bitrates. On more complex data such as subsampled ImageNet, we hybridize PCs with normalizing flows and show that PCs can significantly improve the bitrates of the base normalizing flow models.

\emph{Competitive runtimes.} Our (de)compressor runs 5-40x faster 
% \stephan{check!} 
compared to available implementations of neural lossless compressors with near SoTA performance on datasets such as MNIST.\footnote{Note that there exists compression algorithms optimized particularly for speed by using simple entropy models \citep{townsend2019hilloc}, though that also leads to worse bitrates. See \cref{sec:cmp-alg-eval} for detailed discussion.}
 Our open-source implementation of the PC-based (de)compression algorithm can be found at \url{https://github.com/Juice-jl/PressedJuice.jl}.

\boldparagraph{Notation} We denote random variables by uppercase letters (\eg $X$) and their assignments by lowercase letters (\eg $x$). Analogously, we use bold uppercase (\eg $\X$) and lowercase (\eg $\x$) letters to denote sets of variables and their joint assignments, respectively. The set of all possible joint assignments to variables $\X$ is denoted $\val(\X)$.

\section{Tractability Matters in Lossless Compression}
\label{sec:motivation-overview}

The goal of lossless compression is to map every input sample to an output codeword such that (i) the original input can be reconstructed from the codeword, and (ii) the expected length of the codewords is minimized. Practical (neural) lossless compression algorithms operate in two main phases --- learning and compression \citep{yang2022introduction}. In the learning phase, a generative model $\p(\X)$ is learned from a dataset $\data \!:=\! \{\x^{(i)}\}_{i=1}^{N}$. According to Shannon's source coding theorem \citep{shannon1948mathematical}, the expected codeword length is lower-bounded by the negative cross-entropy between the data distribution $\data$ and the model distribution $\p (\X)$ (\ie $-\expectation_{\x\sim\data} [\log \p (\x)]$), rendering it a natural and widely used objective to optimize the model \citep{hoogeboom2019integer,mentzer2019practical}.

In the compression phase, compression algorithms take the learned model $\p$ and samples $\x$ as input and generate codewords whose expected length approaches the theoretical limit (\ie the negative cross-entropy between $\data$ and $\p$). Although there exist various close-to-optimal compression schemes (\eg Huffman Coding \citep{huffman1952method} and Arithmetic Coding \citep{rissanen1976generalized}), a natural question to ask is \emph{what are the requirements on the model $\p$ such that compression algorithms can utilize it for encoding/decoding in a computationally efficient manner?} In this paper, we highlight the advantages of \emph{tractable} probabilistic models for lossless compression by introducing a concrete class of models that are expressive and support efficient encoding and decoding.

To encode a sample $\x$, a standard streaming code operates by sequentially encoding every symbol $x_i$ into a bitstream $b$, such that $x_i$ occupies approximately $-\log \p(x_i | x_1, \dots, x_{i-1})$ bits in $b$. As a result, the length of $b$ is approximately $-\log \p(\x)$. For example, Arithmetic Coding (AC) encodes the symbols $\{x_i\}_{i=1}^{D}$ (define $D \!:=\! \abs{\X}$ as the number of features) sequentially by successively refining an interval that represents the sample, starting from the initial interval $[0,1)$. To encode $x_i$, the algorithm partitions the current interval $[a,b)$ using the left and right side cumulative probability of $x_i$:
    \begin{align}
        l_{i} (x_i) := \p(X_i \!<\! x_i \mid x_1, \dots, x_{i-1}), \qquad h_{i} (x_i) := \p(X_i \!\leq\! x_i \mid x_1, \dots, x_{i-1}). \label{eq:def-ranges}
    \end{align}
Specifically, the algorithm updates $[a,b)$ to the following: $[a+(b\!-\!a)\!\cdot\!l_{i}(x_i), a+(b\!-\!a)\!\cdot\!h_{i}(x_i))$, which is a sub-interval of $[a,b)$. Finally, AC picks a number within the final interval that has the shortest binary representation. This number is encoded as a bitstream representing the codeword of $\x$. Upon decoding, the symbols $\{x_i\}_{i=1}^{D}$ are decoded sequentially: at iteration $i$, we decode variable $X_i$ by looking up its value $x$ such that its cumulative probability (\ie $l_{i}(x)$) matches the subinterval specified by the codeword and $x_1, \dots, x_{i-1}$ \citep{rissanen1976generalized}; the decoded symbol $x_i$ is then used to compute the following conditional probabilities (\ie $l_{j}(x)$ for $j>i$). Despite implementation differences, computing the cumulative probabilities $l_{i}(x)$ and $h_{i}(x)$ are required for many other streaming codes (\eg rANS). Therefore, for most streaming codes, the main computation cost of both the encoding and decoding process comes from calculating $l_{i} (x)$ and $h_{i} (x)$.

% \guy{this paragraph needs some work for clarity/simplicity. specifically, it is alluded to that a bitstream somehow represents an interval, but I think this will be completely unclear to the reader unless they already know arithmetic coding well.} 

The main challenge for the above (de)compression algorithm is to balance the expressiveness of $\p$ and the computation cost of $\{l_{i}(x), h_{i}(x)\}_{i=1}^{D}$. On the one hand, highly expressive probability models such as energy-based models \citep{lecun2006tutorial,ranzato2007unified} can potentially achieve high compression ratios at the cost of slow runtime, which is due to the requirement of estimating the model's normalizing constant. On the other hand, models that make strong independence assumptions (\eg n-gram, fully-factorized) are cheap to evaluate but lack the expressiveness to model complex distributions over structured data such as images.\footnote{
    Flow-model-based neural compression algorithms adopt $\p$ defined on mutually independent latent variables (denoted $\Z$), and improve expressiveness by learning bijection functions between $\Z$ and $\X$ (\ie the input space). This is orthogonal to our approach of directly learn better $\p$. Furthermore, we can naturally integrate the proposed expressive $\p$ with bijection functions and achieve better performance as demonstrated in \cref{sec:pc+flow}.}

This paper explores the middle ground between the above two extremes. Specifically, we ask: \emph{are there probabilistic models that are both expressive and permit efficient computation of the conditional probabilities in \cref{eq:def-ranges}?} This question can be answered in the affirmative by establishing a new class of tractable lossless compression algorithms using Probabilistic Circuits (PCs) \citep{choiprobabilistic}, which are neural networks that can compute various probabilistic queries efficiently. In the following, we overview the empirical and theoretical results of the proposed (de)compression algorithm.

We start with theoretical findings: the proposed encoding and decoding algorithms enjoy time complexity $\bigO(\log(D) \cdot \abs{\p})$, where $\abs{\p} \geq D$ is the PC model size. The backbone of both algorithms, formally introduced in \cref{sec:fast-and-optimal}, is an algorithm that computes the $2 \!\times\! D$ conditional probabilities $\{l_{i}(x), h_{i}(x)\}_{i=1}^{D}$ given any $\x$ efficiently, as justified by the following theorem.
% \guy{this paragraph and the next theorem say $O(d log p)$ 5 times repeated. Shall we just say it in the theorem and shorten the paragraph?}

\begin{rethm}[\ref{thm:st-dec-pc-compress}]
Let $\x$ be a $D$-dimensional sample, and let $\p$ be a PC model of size $\abs{\p}$, as proposed in this paper. We then have that computing all quantities $\{l_{i}(x_i), h_{i}(x_i)\}_{i=1}^{D}$ takes $\bigO(\log(D) \cdot \abs{\p})$ time. Therefore, en- or decoding $\x$ with a streaming code (\eg Arithmetic Coding) takes $\bigO(\log(D) \!\cdot\! \abs{\p} \!+\! D) = \bigO(\log(D) \!\cdot\! \abs{\p})$ time.
\end{rethm}

\vspace{-0.2em}

The properties of PCs that enable this efficient lossless compression algorithm will be described in \cref{sec:background-pc}, and the backbone inference algorithm with $\bigO(\log(D) \!\cdot\! \abs{\p})$ time complexity will later be shown as \cref{alg:compute-marginals-informal}. \cref{tab:summery} provides an (incomplete) summary of our empirical results. First, the PC-based lossless compression algorithm is fast and competitive. As shown in \cref{tab:summery}, the small PC model achieved a near-SoTA bitrate while being $\smallsim \! 15$x faster 
% \stephan{check!} 
than other neural compression algorithms with a similar bitrate.
Next, PCs can be integrated with Flow-/VAE-based compression methods. As illustrated in \cref{tab:summery}(right), the integrated model significantly improved performance on sub-sampled ImageNet compared to the base IDF model.

\begin{table}[t]
    \caption{An (incomplete) summary of our empirical results. ``Comp.'' stands for compression.}
    \label{tab:summery}
    \begin{minipage}{0.55\columnwidth}
        \centering
        \scalebox{0.88}{
        {\renewcommand{\arraystretch}{0.90}
        \setlength{\tabcolsep}{3.2pt}
        \hskip-0.6em
        \begin{tabular}{lccc}
            \toprule
            \multicolumn{1}{c}{\multirow{2}{*}[-0.08cm]{Method}} & \multicolumn{3}{c}{MNIST (10,000 test images)} \\
            \cmidrule(lr){2-4}
             & Theoretical bpd & Comp. bpd & En- \& decoding time \\
            \midrule
            PC (small) & 1.26 & 1.30 & \textbf{53} \\
            PC (large) & \textbf{1.20} & \textbf{1.24} & 168 \\
            \midrule
            IDF & 1.90 & 1.96 & 880 \\
            BitSwap & 1.27 & 1.31 & 904 \\
            \bottomrule
        \end{tabular}
        }}
    \end{minipage}\hfill
    \begin{minipage}{0.45\columnwidth}
        \centering
        \scalebox{0.88}{
        {\renewcommand{\arraystretch}{0.90}
        \setlength{\tabcolsep}{3.2pt}
        \hskip1.2em
        \begin{tabular}{lcc}
            \toprule
            \multicolumn{1}{c}{\multirow{2}{*}[-0.08cm]{Method}} & ImageNet32 & ImageNet64 \\
            \cmidrule(lr){2-2}
            \cmidrule(lr){3-3}
             & Theoretical bpd & Theoretical bpd \\
            \midrule
            PC+IDF & \textbf{3.99} & \textbf{3.71} \\
            \midrule
            IDF & 4.15 & 3.90 \\
            RealNVP & 4.28 & 3.98 \\
            Glow & 4.09 & 3.81 \\
            \bottomrule
        \end{tabular}
        }}
    \end{minipage}
    % \vspace{-0.6em}
\end{table}

\section{Computationally Efficient (De)compression with PCs}
\label{sec:fast-and-optimal}

In the previous section, we have boiled down the task of lossless compression to calculating conditional probabilities
% \guy{I don't know what it means for a conditional probability to be reflected in an interval} 
$\{l_{i}(x_i), h_{i}(x_i)\}_{i=1}^{D}$ given $\p$ and $x_i$.
% \guy{by our notation paragraph, these $x$ should be bold?}\anji{here I omitted the dependency of $l_i(x)$ on $x_1, \dots, x_{i-1}$ to make notations simpler. So here $x$ really means $x_i$.} 
This section takes PCs into consideration and demonstrates how these queries can be computed efficiently. In the following, we first introduce relevant background on PCs (\cref{sec:background-pc}), and then proceed to introduce the PC-based (de)compression algorithm (\cref{sec:efficient-alg}). Finally, we empirically evaluate the optimality and speed of the proposed compressor and decompressor (\cref{sec:cmp-alg-eval}).

\subsection{Background: Probabilistic Circuits}
\label{sec:background-pc}

\begin{wrapfigure}{r}{0.35\columnwidth}
    \centering
    \vspace{-1.8em}
    \includegraphics[width=0.35\columnwidth]{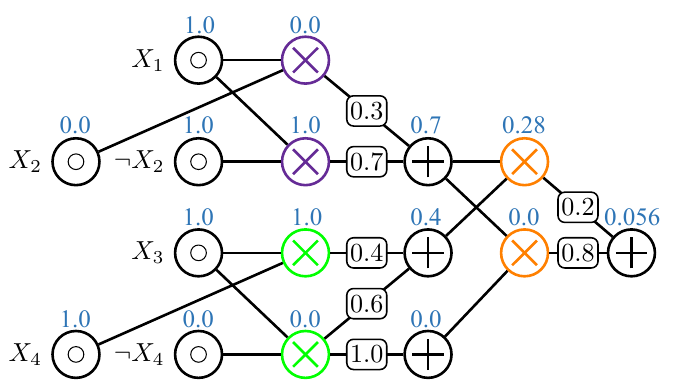}
    \vspace{-2.0em}
    \caption{An example structured-decomposable PC. The feedforward order is from left to right; inputs are assumed to be boolean variables; parameters are labeled on the corresponding edges. Probability of each unit given input assignment $x_1 \overline{x_2} x_4$ is labeled blue next to the corresponding unit.}
    \label{fig:example-pc}
\end{wrapfigure}

Probabilistic Circuits (PCs) are an umbrella term for a wide variety of Tractable Probabilistic Models (TPMs). They provide a set of succinct definitions for popular TPMs such as Sum-Product Networks \citep{poon2011sum}, Arithmetic Circuits \citep{shen2016tractable}, and Probabilistic Sentential Decision Diagrams \citep{kisa2014probabilistic}. The syntax and semantics of a PC are defined as follows.

\begin{defn}[Probabilistic Circuits]
\label{defn:pc}
A PC $\p(\X)$ represents a probability distribution over $\X$ via a parametrized directed acyclic graph (DAG) with a single root node $n_r$. Similar to neural networks, every node of the DAG defines a computational unit. Specifically, each leaf node corresponds to an \emph{input unit}; each inner node $n$ represents either a \emph{sum} or a \emph{product unit} that receives inputs from its children, denoted $\ch(n)$. Each node $n$ encodes a probability distribution $\p_{n}$, defined as follows:
    {\setlength{\abovedisplayskip}{0.2em}
    \setlength{\belowdisplayskip}{-0.6em}
    \begin{align}
        \p_{n} (\x) := 
        \begin{cases}
            f_{n} (\x) & \textrm{if~} n \textrm{~is~an~input~unit,} \\
            \sum_{c \in \ch(n)} \theta_{n,c} \cdot \p_{c} (\x) & \textrm{if~} n \textrm{~is~a~sum~unit,} \\
            \prod_{c \in \ch(n)} \p_{c} (\x) & \textrm{if~} n \textrm{~is~a~product~unit,}
        \end{cases}
        \label{eq:pc-def}
    \end{align}}
    
\noindent where $f_{n}(\cdot)$ is an univariate input distribution (\eg Gaussian, Categorical), and $\theta_{n,c}$ denotes the parameter that corresponds to edge $(n,c)$. Intuitively, sum and product units encode weighted mixtures and factorized distributions of their children's distributions, respectively. To ensure that a PC models a valid distribution, we assume the parameters associated with any sum unit $n$ are normalized: $\forall n, \!\sum_{c \in \ch(n)}\! \theta_{n,c} \!=\! 1$. We further assume w.l.o.g.\ that a PC alternates between sum and product units before reaching an input unit. The size of a PC $\p$, denoted $\abs{\p}$, is the number of edges in its~DAG.
\end{defn}

This paper focuses on PCs that can compute arbitrary marginal queries in time linear in their size, since this is necessary to unlock the efficient (de)compression algorithm. In order to support efficient marginalization, PCs need to be \emph{decomposable} (\cref{defn:decomposability}),\footnote{Another property called \emph{smoothness} is also required to compute marginals efficiently. However, since enforcing smoothness on any structured-decomposable PC only imposes at most an almost-linear increase in its size \citep{shih2019smoothing}, we omit introducing it here (all PCs used in this paper are structured-decomposable). 
% \guy{near-linear time applies to structured decomposable circuits, which is what we have in this paper.}
} which is a property of the (variable) scope $\phi(n)$ of PC units $n$, that is, the collection of variables defined by all its descendent input units.

\begin{defn}[Decomposability]
\label{defn:decomposability}
A PC is decomposable if for every product unit $n$, its children have disjoint scopes: $\forall c_1, c_2 \in \ch(n) \,(c_1 \neq c_2), \phi(c_1) \cap \phi(c_2) = \emptyset$.
\end{defn}

All product units in \cref{fig:example-pc} are decomposable. For example, each purple product unit (whose scope is $\{X_1,X_2\}$) has two children with disjoint scopes $\{X_1\}$ and $\{X_2\}$, respectively. In addition to \cref{defn:decomposability}, we make use of another property, \emph{structured decomposability}, which is the key to guaranteeing computational efficiency of the proposed (de)compression algorithm.

\begin{defn}[Structured decomposability]
\label{defn:st-decomposability}
A PC is structured-decomposable if (i) it is decomposable and (ii) for every pair of product units $(m, n)$ with identical scope (\ie $\phi(m) \!=\! \phi(n)$), we have that $\abs{\ch(m)} \!=\! \abs{\ch(n)}$ and the scopes of their children are pairwise identical: $\forall i \!\in\! \{1,...,\abs{\ch(m)}\}, \phi(cm_i) \!=\! \phi(cn_i)$, where $cm_i$ and $cn_i$ are the $i$th child unit of $m$ and $n$.
\end{defn}

The PC shown in \cref{fig:example-pc} is structured-decomposable because for all three groups of product units with the same scope (grouped by their colors), their children divide the variable scope in the same way. For example, the children of both orange units decompose the scope $\{X_1,X_2,X_3,X_4\}$ into $\{X_1,X_2\}$ and $\{X_3,X_4\}$. 

As a key sub-routine in the proposed algorithm, we describe how to compute marginal queries given a smooth and (structured-)decomposable PC in $\bigO(\abs{\p})$ time. First, we assign probabilities to every input unit: for an input unit $n$ defined on variable $X$, if evidence is provided for $X$ in the query (\eg $X \!=\! x$ or $ X \!<\! x$), we assign to $n$ the corresponding probability (\eg $\p(X \!=\! x), \p(X \!<\! x)$) according to $f_{n}$ in \cref{eq:pc-def}; if evidence of $X$ is not given, probability $1$ is assigned to $n$. Next, we do a feedforward (children before parents) traverse of inner PC units and compute their probabilities following \cref{eq:pc-def}. The probability assigned to the root unit is the final answer of the marginal query. Concretely, consider computing $\p(x_1,\overline{x_2},x_4)$ for the PC in \cref{fig:example-pc}. This is done by (i) assigning probabilities to the input units \wrt the given evidence $x_1$, $\overline{x_2}$, and $x_4$ (assign $0$ to the input unit labeled $X_2$ and $\neg X_4$ as they contradict the given evidence; all other input units are assigned probability $1$), and (ii) evaluate the probabilities of sum/product units following \cref{eq:pc-def}. Evaluated probabilities are labeled next to the corresponding units, hence the marginal probability at the output is $\p(x_1,\overline{x_2},x_4) = 0.056$.

\vspace{-0.5em}
\subsection{Efficient (De-)compression With Structured-Decomposable PCs}
\label{sec:efficient-alg}
\vspace{-0.5em}

The proposed PC-based (de)compression algorithm is outlined in \cref{fig:high-level-alg}. Consider compressing an 2-by-2 image, whose four pixels are denoted as $X_1, \dots, X_4$. As discussed in \cref{sec:motivation-overview}, the encoder converts the image into a bitstream by encoding all variables autoregressively. For example, suppose we have encoded $x_1, x_2$. To encode the next variable $x_3$, we compute the left and right side cumulative probability of $x_3$ given $x_1$ and $x_2$, which are defined as $l_3(x_3)$ and $h_3(x_3)$ in \cref{sec:motivation-overview}, respectively. A streaming code then encodes $x_3$ into a bitstream using these probabilities. Decoding is also performed autoregressively. Specifically, after $x_1$ and $x_2$ are decoded, the same streaming code uses the information from the bitstream and the conditional distribution $\p(x_3 \mid x_1, x_2)$ to decode $x_3$.

Therefore, the main computation cost of the above en- and decoding procedures comes from calculating the $2D$ conditional probabilities $\{l_i(x), h_i(x)\}_{i=1}^{D}$ \wrt any $\x$. Since every conditional probability can be represented as the quotient of two marginals, it is equivalent to compute the two following sets of marginals: $F(\x) := \{\p(x_1, \dots, x_i)\}_{i=1}^{D}$ and $G(\x) := \{\p(x_1, \dots, x_{i-1}, X_i \!<\! x_i)\}_{i=1}^{D}$.

As a direct application of the marginal algorithm described in \cref{sec:background-pc}, for every $\x \!\in\! \val(\X)$, computing the $2D$ marginals $\{F(\x), G(\x)\}$ takes $\bigO(D \!\cdot\! \abs{\p})$ time. However, the linear dependency on $D$ would render compression and decompression extremely time-consuming.

We can significantly accelerate the en- and decoding times if the PC is structured-decomposable (see Definition~\ref{defn:st-decomposability}). To this end, we introduce an algorithm that computes $F(\x)$ and $G(\x)$ in $\bigO(\log(D) \!\cdot\! \abs{\p})$ time (instead of $\bigO(D \!\cdot\! \abs{\p})$), given a smooth and structured-decomposable PC $\p$. For ease of presentation, we only discuss how to compute $F(\x)$ -- the values $G(\x)$ can be computed analogously.\footnote{The only difference between the computation of the $i$th term of $F(\x)$ and the $i$th term of $G(\x)$ is in the value assigned to the inputs for variable $X_i$ (\ie probabilities $\p_n(X_i \!=\! x)$ vs.\ $\p_n(X_i \!<\! x)$).}
% This is because according to the marginal algorithm introduced in \cref{sec:background-pc}, the only difference between computing the $i$th term in $F(\x)$ and $G(\x)$ is that different probabilities (\ie $\p_n(X_i \!=\! x)$ and $\p_n(X_i \!<\! x)$) should be assigned to input units $n$ correspond to $X_i$.\stephan{confusing last sentence. Correspond{\bf ing}? Why "should"?}

\begin{figure}[t]
    \centering
    \includegraphics[width=\columnwidth]{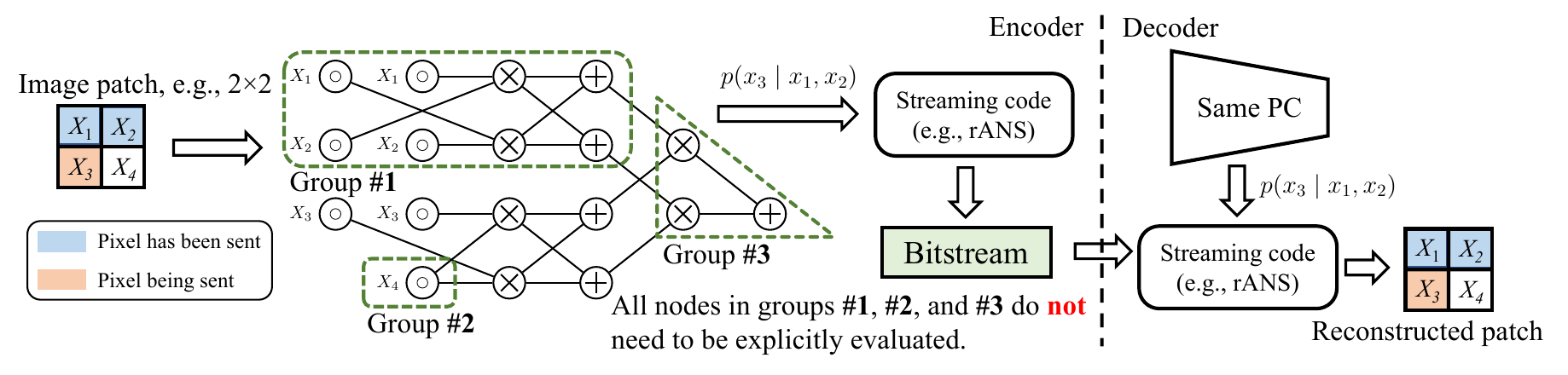}
    \vspace{-2.4em}
    \caption{Overview of the PC-based (de)compressor. The encoder's side sequentially compresses variables one-by-one using the conditional probabilities given all sent variables. These probabilities are computed efficiently using \cref{alg:compute-marginals-informal}. Finally, a streaming code uses conditional probabilities to compress the variables into a bitstream. On the decoder's side, a streaming code decodes the bitstream to reconstruct the image with the conditional probabilities computed by the PC.}
    \label{fig:high-level-alg}
    \vspace{-1.2em}
\end{figure}

Before proceeding with a formal argument, we give a high-level explanation of the acceleration. In practice, we only need to evaluate a small fraction of PC units to compute each of its $D$ marginals. This is different from regular neural networks and 
the key to speeding up the computation of $F(\x)$. 
%For neural networks, even if the value of only one input feature is changed, activations of all intermediate neurons might change, and hence all neurons have to be re-evaluated. 
In contrast to neural networks, changing the input only slightly will leave most activations unchanged for structured-decomposable PCs. We make use of this property by observing that adjacent marginals in $F(\x)$ only differ in one variable --- the $i$th term only adds evidence $x_i$ compared to the $(i-1)$th term. We will show that such similarities between the marginal queries will lead to an algorithm that guarantees $\bigO(\log(D) \!\cdot\! \abs{\p})$ overall time complexity.

An informal version of the proposed algorithm is shown in \cref{alg:compute-marginals-informal}.\footnote{See \cref{sec:tractability} for the formal algorithm and its detailed elaboration.} In the main loop (lines 5-6), the $D$ terms in $F(\x)$ are computed one-by-one. Although the $D$ iterations seem to suggest that the algorithm scales linearly with $D$, we highlight that each iteration on average re-evaluates only $\log(D) / D$ of the PC. Therefore, the computation cost of \cref{alg:compute-marginals-informal} scales logarithmically \wrt $D$. The set of PC units need to be re-evaluated, $\mathrm{eval}_i$, is identified in line 4, and lines 6 evaluates these units in a feedforward manner to compute the target probability (\ie $\p(x_1, \dots, x_i)$).

% In the main loop (lines 4-7), the $D$ terms in $F(\x)$ are computed one-by-one. While computing each term, we first find the PC units that need to be evaluated (line 5). After computing their probabilities in a feedforward manner (line 6), we compute the target probability (\ie $\p(x_1, \dots, x_i)$) using a weighted mixture of these terms.\guy{weighted mixture is confusing me here; the problem is that the current intuition is that we need to reevaluate only part of the circuit, but one assumes that this includes the output node, where you then collect the new marginal. it's a little more complicated than that though, we need to say that we don't even recompute the final layers of the PC, and instead do some weighted mixture magic to further reduce cost (see appendix). Perhaps it is useful to cast this into `computing the output neuron`?}
% \anji{Maybe we can hide these details here, and only explain the intuition in the next section.}
% \footnote{The weights are pre-computed. See \cref{sec:tractability} for more details.} 

Specifically, to minimize computation cost, at iteration $i$, we want to select a set of PC units $\mathrm{eval}_i$ that (i) guarantees the correctness of the target marginal, and (ii) contains the minimum number of units. We achieve this by recognizing three types of PC units that can be safely eliminated for evaluation. Take the PC shown in \cref{fig:high-level-alg} as an example. Suppose we want to compute the third term in $F(\x)$ (\ie $\p(x_1, x_2, x_3)$). First, all PC units in Group \#1 do not need to be re-evaluated since their value only depends on $x_1$ and $x_2$ and hence remains unchanged. Next, PC units in Group \#2 evaluate to $1$. This can be justified from the two following facts: (i) input units correspond to $X_4$ have probability $1$ while computing $\p(x_1,x_2,x_3)$; (ii) for any sum or product unit, if all its children have probability $1$, it also has probability $1$ following \cref{eq:pc-def}. 
Finally, although the activations of the PC units in Group \#3 will change when computing $\p(x_1,x_2,x_3)$, we do not need to \emph{explicitly} evaluate these units --- the root node's probability can be equivalently computed using the weighted mixture of probabilities of units in $\mathrm{eval}_i$. The correctness of this simplification step is justified in \cref{sec:tractability}.

The idea of partially evaluating a PC originates from the Partial Propagation (PP) algorithm \citep{butz2018empirical}. However, PP can only prune away units in Group \#2. Thanks to the specific structure of the marginal queries, we are able to also prune away units in Groups \#1 and \#3.

\begin{figure}[t]
\begin{algorithm}[H]
\caption{Compute $F(\x)$ (see \cref{alg:compute-marginals} for details)}
\label{alg:compute-marginals-informal}
{\fontsize{9}{9} \selectfont
\begin{algorithmic}[1]

\STATE {\bfseries Input:} A smooth and structured-decomposable PC $\p$, variable instantiation $\x$

\STATE {\bfseries Output:} $F_{\pi} (\x) = \{\p(x_1, \dots, x_i) \}_{i=1}^{D}$

\STATE {\bfseries Initialize:} The probability $\p(n)$ of every unit $n$ is initially set to $1$

\STATE $\qquad \qquad \;$ $\forall i, \mathrm{eval}_i \leftarrow $ the set of PC units $n$ that need to be evaluated in the $i$th iteration % \stephan{how are these units identified?}

\NoDo
\FOR{\tikzmarknode{a1}{} $i = 1~\text{\textbf{to}}~D$ \textbf{do}} 

% \STATE \stephan{It appears as if the algorithm still scales with $D$}\guy{yes, but each iteration here on average re-evaluates only $log(D)/D$ of the PC -- would be good to say this explicitly}

\STATE Evaluate PC units in $\mathrm{eval}_{i}$ in a bottom-up manner and compute $\p(x_1, \dots, x_i)$

\ENDFOR
\ReDo

\end{algorithmic}
}
\end{algorithm}
\begin{tikzpicture}[overlay,remember picture]
    \draw[black,line width=0.6pt] ([xshift=-10pt,yshift=-3pt]a1.west) -- ([xshift=-10pt,yshift=-12pt]a1.west) -- ([xshift=-6pt,yshift=-12pt]a1.west);
\end{tikzpicture}
\vspace{-3.6em}
\end{figure}

Finally, we provide additional technical details to rigorously state the complexity of \cref{alg:compute-marginals-informal}. First, we need the variables $\X$ to have a specific order determined by the PC $\p$. To reflect this change, we generalize $F(\x)$ to $F_{\pi} (\x) := \{\p(x_{\pi_1}, \dots, x_{\pi_i})\}_{i=1}^{D}$, where $\pi$ defines some variable order over $\X$, \ie the $i$th variable in the order defined by $\pi$ is $X_{\pi_i}$. Next, we give a technical assumption and then formally justify the \emph{correctness} and \emph{efficiency} of \cref{alg:compute-marginals-informal} when using an optimal variable order $\pi^{*}$.

\begin{defn}
\label{defn:pc-balanced}
For a smooth structured-decomposable PC $\p$ over $D$ variables, for any scope $\phi$, denote $\mathrm{nodes}(\p, \phi)$ as the set of PC units in $\p$ whose scope is $\phi$. We say $\p$ is \emph{balanced} if for every scope $\phi'$ that is equal to the scope of any unit $n$ in $\p$, we have $\abs{\mathrm{nodes}(\p, \phi')} = \bigO(\abs{\p} / D)$.
\end{defn}

\begin{thm}
\label{thm:st-dec-pc-compress}
For a smooth structured-decomposable balanced PC $\p$ over $D$ variables $\X$ and a sample $\x$, there exists a variable order $\pi^{*}$, s.t. \cref{alg:compute-marginals} correctly computes $F_{\pi^{*}} (\x)$ in $\bigO(\log (D) \cdot \abs{\p})$~time.
\vspace{-0.4em}
\end{thm}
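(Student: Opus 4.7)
The plan is to exploit the vtree backbone of a structured-decomposable PC. Every smooth and SD $\p$ admits a hierarchical scope tree $v$ whose leaves are the $D$ variables of $\X$ and whose internal nodes index the distinct scopes appearing among its product units; for each internal vtree node $u$, let $S_u$ denote the set of PC units of scope $\phi(u)$, so that the balancedness hypothesis (\cref{defn:pc-balanced}) gives $|S_u| = O(|\p|/D)$. I would choose $\pi^{*}$ to be the in-order (left-to-right) leaf traversal of $v$. Under this order the prefix $\{X_{\pi^{*}_{1}},\dots,X_{\pi^{*}_{i}}\}$ is an in-order prefix of the leaves, so it interacts with $v$ only along the root-to-leaf path $\rho_i$ terminating at $X_{\pi^{*}_{i}}$: subtrees hanging to the left of $\rho_i$ are fully revealed, and subtrees hanging to the right are entirely hidden.

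With $\pi^{*}$ fixed, I would define $\mathrm{eval}_i := \bigcup_{u \in \rho_i} S_u$ and prove correctness by a bottom-up induction along $\rho_i$. At the bottom the claim reduces to the input-unit semantics of \cref{eq:pc-def}. For a product unit $n \in S_u$ on the path, structured decomposability aligns the children of $n$ with the children of $u$ in $v$: one child lies on $\rho_i$ and is handled inductively, while the other lies either in a fully-revealed left subtree (whose value is cached from the iteration when that subtree's last leaf was processed, i.e.\ a Group~\#1 unit) or in a fully-hidden right subtree (whose value equals $1$ by smoothness together with the ``no-evidence$\,\mapsto\,$probability $1$'' convention from \cref{sec:background-pc}, i.e.\ a Group~\#2 unit). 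For a sum unit on the path, \cref{eq:pc-def} then becomes a linear combination over its on-path child and its cached off-path children, which is exactly the weighted-mixture computation performed in lines~6--7 of \cref{alg:compute-marginals-informal}; this is what lets us eliminate Group~\#3 units without ever materializing their internal activations.

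The complexity bound then follows by accounting: iteration $i$ touches only $\bigcup_{u \in \rho_i} S_u$, which by balancedness has size $O(|\rho_i| \cdot |\p|/D) = O(\log(D) \cdot |\p|/D)$ since a balanced vtree has depth $O(\log D)$, so summing over $i = 1,\dots,D$ yields the claimed $O(\log(D) \cdot |\p|)$ time with the $O(D)$ input-initialization and streaming overhead absorbed into the main term. The hard part I expect is the cache invariant: one must show that the summary stored at each off-path vtree node is written exactly once (when that subtree's last leaf is processed under $\pi^{*}$) and thereafter consumed $O(\log D)$ times with no further updates, and that reading these cached values inside the weighted-mixture step reproduces \cref{eq:pc-def} exactly at every on-path sum unit. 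Getting this amortization right, rather than re-paying for off-path subtrees at every iteration, is what prevents the total cost from degrading to a linear dependence on $D$.
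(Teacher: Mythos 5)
Your correctness argument (cache fully-revealed left subtrees, assign probability $1$ to fully-hidden right subtrees, and absorb the ancestors above the prefix's lowest common ancestor into precomputed top-down weights) matches the paper's in spirit, but your complexity analysis has a genuine gap: you bound the work of iteration $i$ by the length of the root-to-leaf path $\rho_i$ and then assert this is $\bigO(\log D)$ ``since a balanced vtree has depth $\bigO(\log D)$.'' Nothing in the hypotheses gives you that. The balancedness assumption of \cref{defn:pc-balanced} only bounds the \emph{number of units per scope} by $\bigO(\abs{\p}/D)$; it says nothing about the depth of the vtree, which for the paper's own flagship model (HCLTs built on Chow-Liu trees) can be a caterpillar of depth $\Theta(D)$. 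On such a vtree, if the in-order traversal happens to reveal the shallow leaf of each spine node first, iteration $i$ evaluates $\Theta(i)$ vtree nodes (the leaf-to-LCA distance is $i$ even after Group~\#3 elimination), and the total degrades to $\Theta(D\cdot\abs{\p})$ --- exactly the naive cost the theorem is supposed to beat.

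The missing idea is that $\pi^{*}$ is not just \emph{any} in-order traversal: the paper first reorders the vtree so that at every internal node the child with \emph{more} descendant leaves is visited first, and only then takes the in-order traversal. With that ordering, an internal node with $x$ descendant leaves split as $y+z$ ($y\ge z$) is re-evaluated only during the $z$ iterations belonging to its smaller child, yielding the recurrence $f(x)=f(y)+f(z)+z$ with $z\le\lfloor x/2\rfloor$, which solves to $\bigO(D\log D)$ total vtree-node evaluations \emph{independently of the vtree's depth}; multiplying by the $\bigO(\abs{\p}/D)$ units per scope gives the claimed bound. This small-to-large accounting is the heart of the efficiency proof, and without it (or some equivalent device, such as first rebalancing the vtree, which would require a separate argument that the PC size is preserved) the $\bigO(\log(D)\cdot\abs{\p})$ bound does not follow. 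A secondary inconsistency: you define $\mathrm{eval}_i$ as the union of $S_u$ over the \emph{entire} path $\rho_i$, which re-includes the Group~\#3 ancestors you later claim to eliminate; the set actually evaluated should stop at the lowest common ancestor of the current prefix.
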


\vspace{-0.4em}

\begin{proof}
First note that \cref{alg:compute-marginals} is a detailed version of \cref{alg:compute-marginals-informal}.
The high-level idea of the proof is to first show how to compute the optimal variable order $\pi^{*}$ for any smooth and structured-decomposable PC. Next, we justify the correctness of \cref{alg:compute-marginals} by showing (i) we only need to evaluate units that satisfy the criterion in line 6 of \cref{alg:compute-marginals} and (ii) weighing the PC units with the \emph{top-down probabilities} (\cref{sec:tractability}) always give the correct result. Finally, we use induction (on $D$) to demonstrate \cref{alg:compute-marginals} computes $\bigO(\log (D) \!\cdot\! \abs{\p})$ PC units in total if $\pi^{*}$ is used. See \cref{sec:proof-sd-pc} for further details.
\vspace{-0.6em}
\end{proof}

% \footnotetext{The three conditions in line 6 formalize the intuitions provided in the example \cref{fig:sd-pc-compression}(b); see \cref{sec:proof-sd-pc}.}

While \cref{defn:pc-balanced} may seem restrictive at first glance, we highlight that most existing PC structures such as EiNets \citep{peharz2020einsum}, RAT-SPNs \citep{peharz2020random} and HCLTs (\cref{sec:hclt}) are balanced (see \cref{sec:pc-balanced} for justifications). Once all marginal probabilities are calculated, samples $\x$ can be en- or decoded autoregressively with any streaming codes in time $\bigO(\log(D) \!\cdot\! \abs{\p})$. Specifically, our implementation adopted the widely used streaming code rANS \citep{duda2013asymmetric}.

\vspace{-0.5em}
\subsection{Empirical Evaluation}
\label{sec:cmp-alg-eval}
\vspace{-0.5em}

We compare the proposed algorithm with competitive Flow-model-based (IDF by \citet{hoogeboom2019integer}) and VAE-based (BitSwap by \citet{kingma2019bit}) neural compression algorithms using the MNIST dataset. We first evaluate bitrates. As shown in \cref{tab:cmp-decmp-time}, the PC (de)compressor achieved compression rates close to its theoretical rate estimate --- codeword bpds only have $\smallsim\!\!0.04$ loss \wrt the corresponding theoretical bpds. We note that PC and IDF have an additional advantage: their reported bitrates were achieved while compressing one sample at a time; however, BitSwap needs to compress sequences of 100 samples to achieve $1.31$ codeword bpd \citep{kingma2019bit}.

Next, we focus on efficiency. While achieving a better codeword bpd (\ie $1.30$) compared to IDF and BitSwap, a relatively small PC model (\ie HCLT, $M\!=\!16$) encodes (resp. decodes) images 30x (resp. 10x) faster than both baselines.\footnote{HCLT will be introduced in \cref{sec:hclt}; all algorithms use a CPU implementation of rANS as codec. See \cref{sec:cmp-decmp-exp-details} for more details about the experiments.} Furthermore, a bigger PC model ($M\!=\!32$) with 7M parameters achieved codeword bpd $1.24$, and is still 5x faster than BitSwap and IDF. Note that at the cost of increasing the bitrate, one can significantly improve the en- and decoding efficiency. For example, by using a small VAE model, \citet{townsend2019hilloc} managed to compress and decompress 10,000 binarized MNIST samples in 3.26s and 2.82s, respectively.

\vspace{-0.8em}

\paragraph{Related work} As hinted by \cref{sec:motivation-overview}, we seek to directly learn probability distributions $\p (\X)$ that are expressive and support tractable (de)compression. In contrast, existing Flow-based \citep{van2020idf++,zhang2021ivpf} and VAE-based \citep{townsend2019hilloc,kingma2019bit,ho2019compression} neural lossless compression algorithms are based on an orthogonal idea: they adopt simple (oftentimes fully factorized) distributions over a latent space $\Z$ to ensure the tractability of encoding and decoding latent codes $\z$, and learn expressive neural networks that ``transmit'' probability mass from $\Z$ to the feature space $\X$ to compress samples $\x$ indirectly. We note that both ideas can be \emph{integrated} naturally: the simple latent distributions used by existing neural compression algorithms can be replaced by expressive PC models. We will further explore this idea in \cref{sec:pc+flow}.

\begin{table}[t]
    \caption{Efficiency and optimality of the (de)compressor. The compression (resp. decompression) time are the total computation time used to encode (resp. decode) all 10,000 MNIST test samples on a single TITAN RTX GPU. The proposed (de)compressor for structured-decomposable PCs is 5-40x faster than IDF and BitSwap and only leads to a negligible increase in the codeword bpd compared to the theoretical bpd. HCLT is a PC model that will be introduced in \cref{sec:hclt}.}
    \label{tab:cmp-decmp-time}
    \centering
    \scalebox{0.9}{
    {\renewcommand{\arraystretch}{0.75}
    \setlength{\tabcolsep}{3.6pt}
    \begin{tabular}{lccccc}
        \toprule
        Method & \# parameters & Theoretical bpd & Codeword bpd & Comp. time (s) & Decomp. time (s) \\
        \midrule
        PC (HCLT, $M \!=\! 16$) & 3.3M & 1.26 & 1.30 & 9 & 44 \\
        PC (HCLT, $M \!=\! 24$) & 5.1M & 1.22 & 1.26 & 15 & 86 \\
        PC (HCLT, $M \!=\! 32$) & 7.0M & 1.20 & 1.24 & 26 & 142 \\
        IDF & 24.1M & 1.90 & 1.96 & 288 & 592 \\
        BitSwap & 2.8M & 1.27 & 1.31 & 578 & 326 \\
        \bottomrule
    \end{tabular}
    }}
    \vspace{-1.6em}
\end{table}

\vspace{-0.8em}
\section{Scaling Up Learning and Inference of PCs}
\label{sec:pc-learning}
\vspace{-0.8em}

Being equipped with an efficient (de)compressor, our next goal is to learn PC models that achieve good generative performance on various datasets. Although recent breakthroughs have led to PCs that can generate CelebA and SVHN images \citep{peharz2020einsum}, PCs have not been shown to have competitive (normalized) likelihoods on image datasets, which directly influence compression rates. In this section, we show that Hidden Chow-Liu Trees (HCLTs) \citep{liu2021tractable}, a PC model initially proposed for simple density estimation tasks containing binary features, can be scaled up to achieve state-of-the-art performance on various image datasets. In the following, we first introduce HCLTs and demonstrate how to scale up their learning and inference (for compression) in \cref{sec:hclt}, before providing empirical evidence in \cref{sec:hclt-eval}.

\vspace{-0.3em}
\subsection{Hidden Chow-Liu Trees}
\label{sec:hclt}
\vspace{-0.2em}

\begin{figure}[t]
    \centering
    \includegraphics[width=0.9\columnwidth]{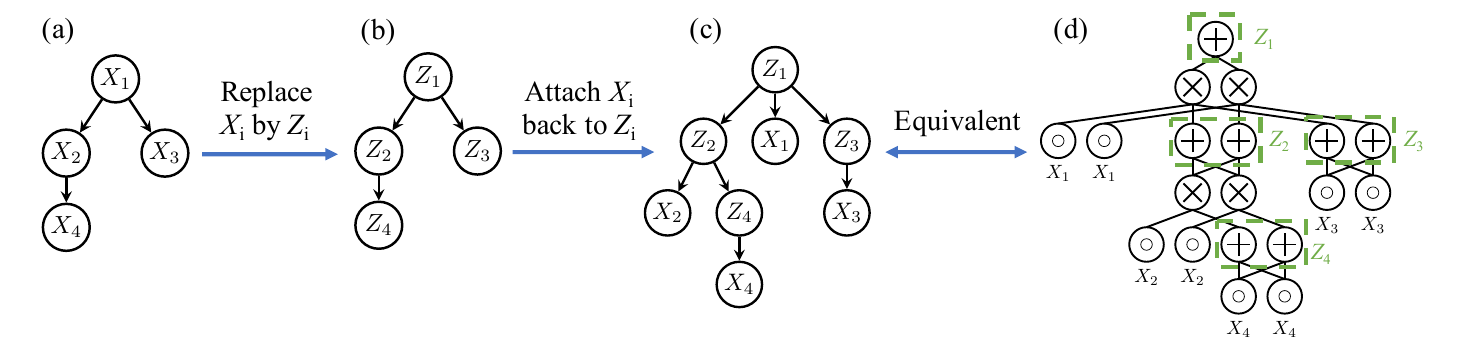}
    \vspace{-1.6em}
    \caption{An example of constructing a HCLT PC given a dataset $\data$ with 4 features. (a): Construct the Chow-Liu Tree over variables $X_1,\dots,X_4$ using $\data$. (b): Replace every variable $X_i$ by its corresponding latent variable $Z_i$. (c): Attach all $X_i$ back to their respective latent variables $Z_i$. (d):~This PGM representation of HCLT is compiled into an equivalent PC.}
    \label{fig:hclt}
    \vspace{-1.2em}
\end{figure}

Hidden Chow-Liu Trees (HCLTs) are smooth and structured-decomposable PCs that combine the ability of Chow-Liu Trees (CLTs) \citep{chow1968approximating} to capture feature correlations and the extra expressive power provided by latent variable models. Every HCLT can be equivalently represented as a Probabilistic Graphical Model (PGM) \citep{koller2009probabilistic} with latent variables. Specifically, \cref{fig:hclt}(a)-(c) demonstrate how to construct the PGM representation of an example HCLT. Given a dataset $\data$ containing 4 features $\X \!=\! X_1, \dots, X_4$, we first learn a CLT \wrt $\X$ (\cref{fig:hclt}(a)). To improve expressiveness, latent variables are added to the CLT by the two following steps: (i) replace observed variables $X_i$ by their corresponding latent variables $Z_i$, which are defined to be categorical variables with $M$ (a hyperparameter) categories (\cref{fig:hclt}(b)); (ii) connect observed variables $X_i$ with the corresponding latent variables $Z_i$ by directed edges $Z_i \!\rightarrow\! X_i$. This leads to the PGM representation of the HCLT shown in \cref{fig:hclt}(c). Finally, we are left with generating a PC that represents an equivalent distribution \wrt the PGM in \cref{fig:hclt}(c), which is detailed in \cref{sec:gen-pc-hclt}. \cref{fig:hclt}(d) illustrates an HCLT that is equivalent to the PGM shown in \cref{fig:hclt}(c) (with $M \!=\! 2$).

Recent advances in scaling up learning and inference of PCs largely rely on the regularity of the PC architectures they used \citep{peharz2020einsum,peharz2020random} --- the layout of the PCs can be easily \emph{vectorized}, allowing them to use well-developed deep learning packages such as PyTorch \citep{paszke2019pytorch}. However, due to the irregular structure of learned CLTs, HCLTs cannot be easily vectorized. To overcome this problem, we implemented customized GPU kernels for parameter learning and marginal query computation (\ie \cref{alg:compute-marginals}) based on Juice.jl \citep{dang2021juice}, an open-source Julia package. The kernels automatically segment PC units into layers such that the computation in every layer can be fully parallelized. As a result, we can train PCs with millions of parameters in less than an hour and en- or decode samples very efficiently. Implementation details can be found in \cref{sec:impl-details}.

\vspace{-0.4em}

\paragraph{Related work} Finding good PC architectures has been a central topic in the literature \citep{choiprobabilistic}. A recent trend for learning smooth and (structured-)decomposable PCs is to construct large models with pre-defined architecture, which is mainly determined by the variable ordering strategies. For example, RAT-SPNs \citep{peharz2020random} and EiNets \citep{peharz2020einsum} use random variable orders, \citet{gens2013learning} proposes an effective variable ordering for image data, and other works propose data-dependent orderings based on certain information criterion \citep{rooshenas2014learning} or clustering algorithms \citep{gens2013learning}. Alternatively, researchers have focused on methods that iteratively grow PC structures to better fit the data \citep{dang2020strudel,liang2017learning}. 
% However, this approach has mostly been successfully applied to PCs that are also deterministic, which are generally less expressive than their non-deterministic counterparts \citep{choi2017relaxing}.

\vspace{-0.4em}
\subsection{Empirical Evaluation}
\label{sec:hclt-eval}
\vspace{-0.8em}

\begin{table}[t]
    \caption{Compression performance of PCs on MNIST, FashionMNIST, and EMNIST in bits-per-dimension (bpd). For all neural compression algorithms, numbers in parentheses represent the corresponding theoretical bpd (\ie models' test-set likelihood in bpd).}
    \label{tab:hclt-lls}
    \centering
    \scalebox{0.9}{
    {\renewcommand{\arraystretch}{0.95}
    \setlength{\tabcolsep}{4.5pt}
    \begin{tabular}{lccccccc}
        \toprule
        Dataset & HCLT (ours) & IDF & BitSwap & BB-ANS & JPEG2000 & WebP & McBits \\
        \midrule
        MNIST & \textbf{1.24} (1.20) & 1.96 (1.90) & 1.31 (1.27) & 1.42 (1.39) & 3.37 & 2.09 & (1.98) \\
        FashionMNIST & 3.37 (3.34) & 3.50 (3.47) & \textbf{3.35} (3.28) & 3.69 (3.66) & 3.93 & 4.62 & (3.72) \\
        EMNIST (Letter) & \textbf{1.84} (1.80) & 2.02 (1.95) & 1.90 (1.84) & 2.29 (2.26) & 3.62 & 3.31 & (3.12) \\
        EMNIST (ByClass) & \textbf{1.89} (1.85) & 2.04 (1.98) & 1.91 (1.87) & 2.24 (2.23) & 3.61 & 3.34 & (3.14) \\
        \bottomrule
    \end{tabular}
    }}
    \vspace{-1.4em}
\end{table}

Bringing together expressive PCs (\ie HCLTs) and our (de)compressor, we proceed to evaluate the compression performance of the proposed PC-based algorithm. We compare with 5 competitive lossless compression algorithm: JPEG2000 \citep{christopoulos2000jpeg2000}; WebP; IDF \citep{hoogeboom2019integer}, a Flow-based lossless compression algorithm; BitSwap \citep{kingma2019bit}, BB-ANS \citep{townsend2018practical}, and McBits \citep{ruan2021improving}, three VAE-based lossless compression methods. All 6 methods were tested on 4 datasets, which include MNIST \citep{deng2012mnist}, FashionMNIST \citep{xiao2017fashion}, and two splits of EMNIST \citep{cohen2017emnist}. As shown in \cref{tab:hclt-lls}, the proposed method out-performed all 5 baselines in 3 out of 4 datasets. On FashionMNIST, where the proposed approach did not achieve a state-of-the-art result, it was only $0.02$ bpd worse than~BitSwap.

\vspace{-0.6em}
\section{PCs as Expressive Prior Distributions of Flow Models}
\label{sec:pc+flow}
\vspace{-0.6em}

\vspace{-0.2em}

\begin{wraptable}{r}{0.48\columnwidth}
    \vspace{-2.2em}
    \caption{Theoretical bpd of 5 Flow-based generative models on three natural image datasets.}
    \label{tab:results-natural-img}
    \centering
    \scalebox{0.86}{
    \setlength{\tabcolsep}{3.4pt}
    \begin{tabular}{lccc}
        \toprule
        Model & CIFAR10 & ImageNet32 & ImageNet64 \\
        \midrule
        % JPEG2000 & 5.20 & 6.48 & 5.10 \\
        % PNG & 5.89 & 6.42 & 5.74 \\
        % FLIF & 4.37 & 5.09 & 4.55 \\
        % \midrule
        RealNVP & 3.49 & 4.28 & 3.98 \\
        Glow & 3.35 & 4.09 & 3.81 \\
        IDF & 3.32 & 4.15 & 3.90 \\
        IDF++ & \textbf{3.24} & 4.10 & 3.81 \\
        PC+IDF & 3.28 & \textbf{3.99} & \textbf{3.71} \\
        \bottomrule
    \end{tabular}}
    \vspace{-0.8em}
\end{wraptable}

As hinted by previous sections, PCs can be naturally integrated with existing neural compression algorithms: the simple latent variable distributions used by Flow- and VAE-based lossless compression methods can be replaced by more expressive distributions represented by PCs. In this section, we take IDF \citep{hoogeboom2019integer}, a Flow-based lossless compression model, as an example to demonstrate the effectiveness of such model integration. IDF was chosen because its authors provided an open-source implementation on GitHub. In theory, PC can be integrated with any VAE- and Flow-based model. 

The integrated model is illustrated in \cref{fig:pc-flow-model}. Following \citet{hoogeboom2019integer}, an IDF model contains $k$ levels. Each level contains a squeeze layer \citep{dinh2016density}, followed by several integer flow layers and a prior layer. Each level $i$ outputs a set of latent variables $\z_i$, which are originally defined as a set of mutually independent discretized logistic variables \citep{kingma2016improved}. Instead, we propose to model every set of latent variables $\z_i$ with a PC $\p(\z_i)$. Specifically, we adopted the EiNet codebase \citep{peharz2020einsum} and used a PC structure similar to the one proposed by \citet{gens2013learning}. We adopted the discretized logistic distribution for all leaf units in the PCs. Given a sample $\x$, the log-likelihood of the model is the sum of the $k$ PCs' output log-likelihood: $\log \p(\x) \!=\! \sum_{i=1}^{k} \log \p(\z_i \mid \x)$. Since both IDF and the PC models are fully differentiable, the PC+IDF model can be trained end-to-end via gradient descent. Details regarding model architecture and parameter learning are provided in \cref{sec:pc-idf-details}.

We proceed to evaluate the generative performance of the proposed PC+IDF model on $3$ natural image datasets: CIFAR10, ImageNet32, and ImageNet64. Results are shown in \cref{tab:results-natural-img}. First, compared to 4 baselines (\ie IDF, IDF++ \citep{van2020idf++}, Glow \citep{kingma2018glow}, and RealNVP \citep{dinh2016density}), PC+IDF achieved the best bpd on ImageNet32 and ImageNet64. Next, PC+IDF improved over its base model IDF by 0.04, 0.16, and 0.19 bpd on three datasets, respectively. This shows the benefit of integrating PCs with IDFs. Although not tested in our experiments, we conjecture that the performance could be further improved by integrating PCs with better Flow models (\eg IDF++). Concurrently, \citet{zhang2021out} proposes an autoregressive model-based compressor NeLLoC, which achieved SoTA results on natural image datasets including CIFAR-10.

Compression and decompression with the PC+IDF model can be done easily: we can adopt the high-level compression algorithm of IDF and replace the parts of en- or decoding latent variables $\z_i$ with the proposed PC (de)compressor. Improving the compression performance of these hybrid models is left for future work. Note that \cref{thm:st-dec-pc-compress} only applies to the PC component, and the compression time still depends linearly on the size of the neural network.

\begin{figure}[t]
    \centering
    \includegraphics[width=0.92\columnwidth]{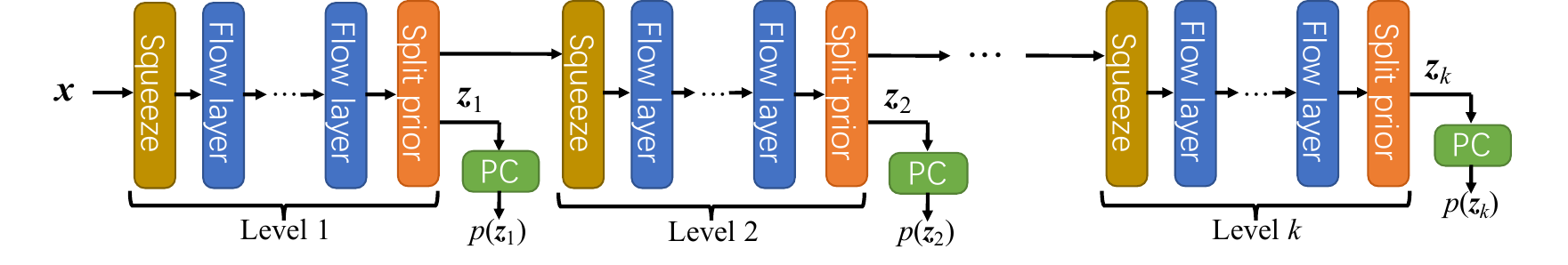}
    \vspace{-1.6em}
    \caption{Using PCs as prior distributions of the IDF model \citep{hoogeboom2019integer}. PCs are used to represent the $k$ sets of latent variables $\{\z_{i}\}_{i=1}^{k}$.}
    \label{fig:pc-flow-model}
    \vspace{-1.8em}
\end{figure}

\vspace{-1.0em}
\section{Conclusions}
\vspace{-1.0em}

This paper proposes to use Probabilistic Circuits (PCs) for lossless compression. We develop a theoretically-grounded (de)compression algorithm that efficiently encodes and decodes close to the model's theoretical rate estimate. Our work provides evidence that more ``niche'' generative model architectures such as PCs can make valuable contributions to neural compression.

\paragraph{Acknowledgements} Guy Van den Broeck acknowledges funding by NSF grants \#IIS-1943641, \#IIS-1956441, \#CCF-1837129, DARPA grant \#N66001-17-2-4032, and a Sloan Fellowship. Stephan Mandt acknowledges funding by NSF grants \#IIS-2047418 and \#IIS-2007719. We thank Yibo Yang for feedback on the manuscript's final version.

\paragraph{Ethics and Reproducibility Statement} We are not aware of any ethical concerns of our research. To facilitate reproducibility, we have uploaded our code to the following GitHub repo: \url{https://github.com/Juice-jl/PressedJuice.jl}. In addition, we have provided detailed algorithm tables \cref{alg:top-down-prob,alg:compute-marginals} for all proposed algorithms, and elaborated each step in detail in the main text (\cref{sec:fast-and-optimal}). Formal proofs of all theorems, and details of all experiments (\eg hardware specifications, hyperparameters) are provided in the appendix.

\bibliography{refs}

\begin{thebibliography}{48}
\providecommand{\natexlab}[1]{#1}
\providecommand{\url}[1]{\texttt{#1}}
\expandafter\ifx\csname urlstyle\endcsname\relax
  \providecommand{\doi}[1]{doi: #1}\else
  \providecommand{\doi}{doi: \begingroup \urlstyle{rm}\Url}\fi

\bibitem[Butz et~al.(2018)Butz, Oliveira, Santos, Teixeira, Poupart, and
  Kalra]{butz2018empirical}
Cory~J Butz, Jhonatan~S Oliveira, Andr{\'e}~E Santos, Andr{\'e}~L Teixeira,
  Pascal Poupart, and Agastya Kalra.
\newblock An empirical study of methods for spn learning and inference.
\newblock In \emph{International Conference on Probabilistic Graphical Models},
  pp.\  49--60. PMLR, 2018.

\bibitem[Choi et~al.(2020)Choi, Vergari, and Van~den Broeck]{choiprobabilistic}
YooJung Choi, Antonio Vergari, and Guy Van~den Broeck.
\newblock Probabilistic circuits: A unifying framework for tractable
  probabilistic models.
\newblock 2020.

\bibitem[Choi et~al.(2021)Choi, Dang, and Van~den Broeck]{ChoiAAAI21}
YooJung Choi, Meihua Dang, and Guy Van~den Broeck.
\newblock Group fairness by probabilistic modeling with latent fair decisions.
\newblock In \emph{Proceedings of the 35th AAAI Conference on Artificial
  Intelligence}, Feb 2021.

\bibitem[Chow \& Liu(1968)Chow and Liu]{chow1968approximating}
CKCN Chow and Cong Liu.
\newblock Approximating discrete probability distributions with dependence
  trees.
\newblock \emph{IEEE transactions on Information Theory}, 14\penalty0
  (3):\penalty0 462--467, 1968.

\bibitem[Christopoulos et~al.(2000)Christopoulos, Skodras, and
  Ebrahimi]{christopoulos2000jpeg2000}
Charilaos Christopoulos, Athanassios Skodras, and Touradj Ebrahimi.
\newblock The jpeg2000 still image coding system: an overview.
\newblock \emph{IEEE transactions on consumer electronics}, 46\penalty0
  (4):\penalty0 1103--1127, 2000.

\bibitem[Cohen et~al.(2017)Cohen, Afshar, Tapson, and
  Van~Schaik]{cohen2017emnist}
Gregory Cohen, Saeed Afshar, Jonathan Tapson, and Andre Van~Schaik.
\newblock Emnist: Extending mnist to handwritten letters.
\newblock In \emph{2017 International Joint Conference on Neural Networks
  (IJCNN)}, pp.\  2921--2926. IEEE, 2017.

\bibitem[Dang et~al.(2020)Dang, Vergari, and Broeck]{dang2020strudel}
Meihua Dang, Antonio Vergari, and Guy Broeck.
\newblock {Strudel}: Learning structured-decomposable probabilistic circuits.
\newblock In \emph{International Conference on Probabilistic Graphical Models},
  pp.\  137--148. PMLR, 2020.

\bibitem[Dang et~al.(2021)Dang, Khosravi, Liang, Vergari, and Van~den
  Broeck]{dang2021juice}
Meihua Dang, Pasha Khosravi, Yitao Liang, Antonio Vergari, and Guy Van~den
  Broeck.
\newblock Juice: A julia package for logic and probabilistic circuits.
\newblock In \emph{Proceedings of the 35th AAAI Conference on Artificial
  Intelligence (Demo Track)}, 2021.

\bibitem[Deng(2012)]{deng2012mnist}
Li~Deng.
\newblock The mnist database of handwritten digit images for machine learning
  research.
\newblock \emph{IEEE Signal Processing Magazine}, 29\penalty0 (6):\penalty0
  141--142, 2012.

\bibitem[Dinh et~al.(2014)Dinh, Krueger, and Bengio]{dinh2014nice}
Laurent Dinh, David Krueger, and Yoshua Bengio.
\newblock {Nice}: Non-linear independent components estimation.
\newblock \emph{arXiv preprint arXiv:1410.8516}, 2014.

\bibitem[Dinh et~al.(2016)Dinh, Sohl-Dickstein, and Bengio]{dinh2016density}
Laurent Dinh, Jascha Sohl-Dickstein, and Samy Bengio.
\newblock Density estimation using real nvp.
\newblock In \emph{International Conference on Learning Representations}, 2016.

\bibitem[Duda(2013)]{duda2013asymmetric}
Jarek Duda.
\newblock Asymmetric numeral systems: entropy coding combining speed of huffman
  coding with compression rate of arithmetic coding.
\newblock \emph{arXiv preprint arXiv:1311.2540}, 2013.

\bibitem[Gens \& Domingos(2013)Gens and Domingos]{gens2013learning}
Robert Gens and Pedro Domingos.
\newblock Learning the structure of sum-product networks.
\newblock In \emph{International conference on machine learning}, pp.\
  873--880. PMLR, 2013.

\bibitem[Goodfellow et~al.(2014)Goodfellow, Pouget-Abadie, Mirza, Xu,
  Warde-Farley, Ozair, Courville, and Bengio]{goodfellow2014generative}
Ian Goodfellow, Jean Pouget-Abadie, Mehdi Mirza, Bing Xu, David Warde-Farley,
  Sherjil Ozair, Aaron Courville, and Yoshua Bengio.
\newblock Generative adversarial nets.
\newblock \emph{Advances in neural information processing systems}, 27, 2014.

\bibitem[Hinton \& Van~Camp(1993)Hinton and Van~Camp]{hinton1993keeping}
Geoffrey~E Hinton and Drew Van~Camp.
\newblock Keeping the neural networks simple by minimizing the description
  length of the weights.
\newblock In \emph{Proceedings of the sixth annual conference on Computational
  learning theory}, pp.\  5--13, 1993.

\bibitem[Ho et~al.(2019)Ho, Lohn, and Abbeel]{ho2019compression}
Jonathan Ho, Evan Lohn, and Pieter Abbeel.
\newblock Compression with flows via local bits-back coding.
\newblock In \emph{Proceedings of the 33rd International Conference on Neural
  Information Processing Systems}, pp.\  3879--3888, 2019.

\bibitem[Hoogeboom et~al.(2019)Hoogeboom, Peters, van~den Berg, and
  Welling]{hoogeboom2019integer}
Emiel Hoogeboom, Jorn Peters, Rianne van~den Berg, and Max Welling.
\newblock Integer discrete flows and lossless compression.
\newblock \emph{Advances in Neural Information Processing Systems},
  32:\penalty0 12134--12144, 2019.

\bibitem[Huffman(1952)]{huffman1952method}
David~A Huffman.
\newblock A method for the construction of minimum-redundancy codes.
\newblock \emph{Proceedings of the IRE}, 40\penalty0 (9):\penalty0 1098--1101,
  1952.

\bibitem[Kingma \& Dhariwal(2018)Kingma and Dhariwal]{kingma2018glow}
Diederik~P Kingma and Prafulla Dhariwal.
\newblock {Glow}: generative flow with invertible 1$\times$ 1 convolutions.
\newblock In \emph{Proceedings of the 32nd International Conference on Neural
  Information Processing Systems}, pp.\  10236--10245, 2018.

\bibitem[Kingma \& Welling(2013)Kingma and Welling]{kingma2013auto}
Diederik~P Kingma and Max Welling.
\newblock Auto-encoding variational bayes.
\newblock \emph{arXiv preprint arXiv:1312.6114}, 2013.

\bibitem[Kingma et~al.(2016)Kingma, Salimans, Jozefowicz, Chen, Sutskever, and
  Welling]{kingma2016improved}
Diederik~P Kingma, Tim Salimans, Rafal Jozefowicz, Xi~Chen, Ilya Sutskever, and
  Max Welling.
\newblock Improved variational inference with inverse autoregressive flow.
\newblock In \emph{Proceedings of the 30th International Conference on Neural
  Information Processing Systems}, pp.\  4743--4751, 2016.

\bibitem[Kingma et~al.(2019)Kingma, Abbeel, and Ho]{kingma2019bit}
Friso Kingma, Pieter Abbeel, and Jonathan Ho.
\newblock Bit-swap: Recursive bits-back coding for lossless compression with
  hierarchical latent variables.
\newblock In \emph{International Conference on Machine Learning}, pp.\
  3408--3417. PMLR, 2019.

\bibitem[Kisa et~al.(2014)Kisa, Van~den Broeck, Choi, and
  Darwiche]{kisa2014probabilistic}
Doga Kisa, Guy Van~den Broeck, Arthur Choi, and Adnan Darwiche.
\newblock Probabilistic sentential decision diagrams.
\newblock In \emph{Proceedings of the 14th international conference on
  principles of knowledge representation and reasoning (KR)}, pp.\  1--10,
  2014.

\bibitem[Koller \& Friedman(2009)Koller and Friedman]{koller2009probabilistic}
Daphne Koller and Nir Friedman.
\newblock \emph{Probabilistic graphical models: principles and techniques}.
\newblock MIT press, 2009.

\bibitem[Lecun et~al.(2006)Lecun, Chopra, Hadsell, Ranzato, and
  Huang]{lecun2006tutorial}
Yann Lecun, Sumit Chopra, Raia Hadsell, Marc~Aurelio Ranzato, and Fu~Jie Huang.
\newblock A tutorial on energy-based learning.
\newblock \emph{Predicting structured data}, 2006.

\bibitem[Liang et~al.(2017)Liang, Bekker, and Van~den
  Broeck]{liang2017learning}
Yitao Liang, Jessa Bekker, and Guy Van~den Broeck.
\newblock Learning the structure of probabilistic sentential decision diagrams.
\newblock In \emph{Proceedings of the 33rd Conference on Uncertainty in
  Artificial Intelligence (UAI)}, 2017.

\bibitem[Liu \& Van~den Broeck(2021)Liu and Van~den Broeck]{liu2021tractable}
Anji Liu and Guy Van~den Broeck.
\newblock Tractable regularization of probabilistic circuits.
\newblock In \emph{Advances in Neural Information Processing Systems 35
  (NeurIPS)}, dec 2021.

\bibitem[Mentzer et~al.(2019)Mentzer, Agustsson, Tschannen, Timofte, and
  Gool]{mentzer2019practical}
Fabian Mentzer, Eirikur Agustsson, Michael Tschannen, Radu Timofte, and Luc~Van
  Gool.
\newblock Practical full resolution learned lossless image compression.
\newblock In \emph{Proceedings of the IEEE/CVF conference on computer vision
  and pattern recognition}, pp.\  10629--10638, 2019.

\bibitem[Paszke et~al.(2019)Paszke, Gross, Massa, Lerer, Bradbury, Chanan,
  Killeen, Lin, Gimelshein, Antiga, et~al.]{paszke2019pytorch}
Adam Paszke, Sam Gross, Francisco Massa, Adam Lerer, James Bradbury, Gregory
  Chanan, Trevor Killeen, Zeming Lin, Natalia Gimelshein, Luca Antiga, et~al.
\newblock {PyTorch}: an imperative style, high-performance deep learning
  library.
\newblock In \emph{Proceedings of the 33rd International Conference on Neural
  Information Processing Systems}, pp.\  8026--8037, 2019.

\bibitem[Peharz et~al.(2020{\natexlab{a}})Peharz, Lang, Vergari, Stelzner,
  Molina, Trapp, Van~den Broeck, Kersting, and Ghahramani]{peharz2020einsum}
Robert Peharz, Steven Lang, Antonio Vergari, Karl Stelzner, Alejandro Molina,
  Martin Trapp, Guy Van~den Broeck, Kristian Kersting, and Zoubin Ghahramani.
\newblock Einsum networks: Fast and scalable learning of tractable
  probabilistic circuits.
\newblock In \emph{International Conference on Machine Learning}, pp.\
  7563--7574. PMLR, 2020{\natexlab{a}}.

\bibitem[Peharz et~al.(2020{\natexlab{b}})Peharz, Vergari, Stelzner, Molina,
  Shao, Trapp, Kersting, and Ghahramani]{peharz2020random}
Robert Peharz, Antonio Vergari, Karl Stelzner, Alejandro Molina, Xiaoting Shao,
  Martin Trapp, Kristian Kersting, and Zoubin Ghahramani.
\newblock Random sum-product networks: A simple and effective approach to
  probabilistic deep learning.
\newblock In \emph{Uncertainty in Artificial Intelligence}, pp.\  334--344.
  PMLR, 2020{\natexlab{b}}.

\bibitem[Poon \& Domingos(2011)Poon and Domingos]{poon2011sum}
Hoifung Poon and Pedro Domingos.
\newblock Sum-product networks: A new deep architecture.
\newblock In \emph{2011 IEEE International Conference on Computer Vision
  Workshops (ICCV Workshops)}, pp.\  689--690. IEEE, 2011.

\bibitem[Ranzato et~al.(2007)Ranzato, Boureau, Chopra, and
  LeCun]{ranzato2007unified}
Marc’Aurelio Ranzato, Y-Lan Boureau, Sumit Chopra, and Yann LeCun.
\newblock A unified energy-based framework for unsupervised learning.
\newblock In \emph{Artificial Intelligence and Statistics}, pp.\  371--379.
  PMLR, 2007.

\bibitem[Rissanen(1976)]{rissanen1976generalized}
Jorma~J Rissanen.
\newblock Generalized kraft inequality and arithmetic coding.
\newblock \emph{IBM Journal of research and development}, 20\penalty0
  (3):\penalty0 198--203, 1976.

\bibitem[Rooshenas \& Lowd(2014)Rooshenas and Lowd]{rooshenas2014learning}
Amirmohammad Rooshenas and Daniel Lowd.
\newblock Learning sum-product networks with direct and indirect variable
  interactions.
\newblock In \emph{International Conference on Machine Learning}, pp.\
  710--718. PMLR, 2014.

\bibitem[Ruan et~al.(2021)Ruan, Ullrich, Severo, Townsend, Khisti, Doucet,
  Makhzani, and Maddison]{ruan2021improving}
Yangjun Ruan, Karen Ullrich, Daniel Severo, James Townsend, Ashish Khisti,
  Arnaud Doucet, Alireza Makhzani, and Chris~J Maddison.
\newblock Improving lossless compression rates via monte carlo bits-back
  coding.
\newblock In \emph{International Conference on Machine Learning}, 2021.

\bibitem[Shannon(1948)]{shannon1948mathematical}
Claude~Elwood Shannon.
\newblock A mathematical theory of communication.
\newblock \emph{The Bell system technical journal}, 27\penalty0 (3):\penalty0
  379--423, 1948.

\bibitem[Shen et~al.(2016)Shen, Choi, and Darwiche]{shen2016tractable}
Yujia Shen, Arthur Choi, and Adnan Darwiche.
\newblock Tractable operations for arithmetic circuits of probabilistic models.
\newblock In \emph{Proceedings of the 30th International Conference on Neural
  Information Processing Systems}, pp.\  3943--3951. Citeseer, 2016.

\bibitem[Shih et~al.(2019)Shih, Van~den Broeck, Beame, and
  Amarilli]{shih2019smoothing}
Andy Shih, Guy Van~den Broeck, Paul Beame, and Antoine Amarilli.
\newblock Smoothing structured decomposable circuits.
\newblock \emph{Advances in Neural Information Processing Systems},
  32:\penalty0 11416--11426, 2019.

\bibitem[Townsend et~al.(2018)Townsend, Bird, and
  Barber]{townsend2018practical}
James Townsend, Thomas Bird, and David Barber.
\newblock Practical lossless compression with latent variables using bits back
  coding.
\newblock In \emph{International Conference on Learning Representations}, 2018.

\bibitem[Townsend et~al.(2019)Townsend, Bird, Kunze, and
  Barber]{townsend2019hilloc}
James Townsend, Thomas Bird, Julius Kunze, and David Barber.
\newblock {HiLLoC}: lossless image compression with hierarchical latent
  variable models.
\newblock In \emph{International Conference on Learning Representations}, 2019.

\bibitem[Vahdat \& Kautz(2020)Vahdat and Kautz]{vahdat2020nvae}
Arash Vahdat and Jan Kautz.
\newblock {Nvae}: A deep hierarchical variational autoencoder.
\newblock \emph{arXiv preprint arXiv:2007.03898}, 2020.

\bibitem[van~den Berg et~al.(2020)van~den Berg, Gritsenko, Dehghani,
  S{\o}nderby, and Salimans]{van2020idf++}
Rianne van~den Berg, Alexey~A Gritsenko, Mostafa Dehghani, Casper~Kaae
  S{\o}nderby, and Tim Salimans.
\newblock Idf++: Analyzing and improving integer discrete flows for lossless
  compression.
\newblock In \emph{International Conference on Learning Representations}, 2020.

\bibitem[Xiao et~al.(2017)Xiao, Rasul, and Vollgraf]{xiao2017fashion}
Han Xiao, Kashif Rasul, and Roland Vollgraf.
\newblock Fashion-mnist: a novel image dataset for benchmarking machine
  learning algorithms.
\newblock \emph{arXiv preprint arXiv:1708.07747}, 2017.

\bibitem[Yang et~al.(2022)Yang, Mandt, and Theis]{yang2022introduction}
Yibo Yang, Stephan Mandt, and Lucas Theis.
\newblock An introduction to neural data compression.
\newblock \emph{arXiv preprint arXiv:2202.06533}, 2022.

\bibitem[Zhang et~al.(2021{\natexlab{a}})Zhang, Zhang, and
  McDonagh]{zhang2021out}
Mingtian Zhang, Andi Zhang, and Steven McDonagh.
\newblock On the out-of-distribution generalization of probabilistic image
  modelling.
\newblock \emph{arXiv preprint arXiv:2109.02639}, 2021{\natexlab{a}}.

\bibitem[Zhang et~al.(2021{\natexlab{b}})Zhang, Zhang, Kang, and
  Li]{zhang2021ivpf}
Shifeng Zhang, Chen Zhang, Ning Kang, and Zhenguo Li.
\newblock {iVPF}: Numerical invertible volume preserving flow for efficient
  lossless compression.
\newblock In \emph{Proceedings of the IEEE/CVF Conference on Computer Vision
  and Pattern Recognition}, pp.\  620--629, 2021{\natexlab{b}}.

\bibitem[Zheng \& Sun(2019)Zheng and Sun]{zheng2019disentangling}
Zhilin Zheng and Li~Sun.
\newblock Disentangling latent space for vae by label relevant/irrelevant
  dimensions.
\newblock In \emph{Proceedings of the IEEE/CVF Conference on Computer Vision
  and Pattern Recognition}, pp.\  12192--12201, 2019.

\end{thebibliography}
\bibliographystyle{iclr2022_conference}

\clearpage
\appendix

\allowdisplaybreaks[4]

\section*{\centering \huge \bf Supplementary Material}

\section{Algorithm Details and Proofs}

This section provides additional details about the algorithm used to compute the conditional probabilities $F_{\pi} (\x)$ (\ie \cref{alg:compute-marginals-informal}) and the full proof of the theorems stated in the main paper.

\subsection{Details of \cref{alg:compute-marginals-informal}}
\label{sec:tractability}

This section provides additional technical details of \cref{alg:compute-marginals-informal}. Specifically, we demonstrate (i) how to select the set of PC units $\mathrm{eval}_i$ (\cf \cref{alg:compute-marginals-informal} line 5) and (ii) how to compute $\p(x_1, \dots, x_i)$ as a weighted mixture of $P_i$ (\cf \cref{alg:compute-marginals-informal} line 7).
Using the example in \cref{fig:sd-pc-compression}, we aim to provide an intuitive illustration to both problems. As an extension to \cref{alg:compute-marginals-informal}, rigorous and executable pseudocode for the proposed algorithm can be found in \cref{alg:top-down-prob,alg:compute-marginals}.

The key to speeding up the naive marginalization algorithm is the observation that we only need to evaluate a small fraction of PC units to compute each of the $D$ marginals in $F_{\pi} (\x)$. Suppose we want to compute $F_{\pi} (\x)$ given the structured-decomposable PC shown in \cref{fig:sd-pc-compression}(a), where {\includegraphics[height=0.75em]{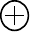}}, {\includegraphics[height=0.75em]{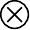}}, and {\includegraphics[height=0.75em]{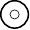}} denote sum, product, and input units, respectively. Model parameters are omitted for simplicity. Consider using the variable order $\pi \!=\! (X_1, X_2, X_3)$ (\cref{fig:sd-pc-compression}(b)). We ask the following question: what is the minimum set of PC units that need to be evaluated in order to compute $\p(X_1 \!=\! x_1)$ (the first term in $F_{\pi}(\x)$)? First, every PC unit with scope $\{X_1\}$ (\ie the two nodes colored blue) has to be evaluated. 
Next, every PC unit $n$ that is not an ancestor of the two blue units (\ie ``non-ancestor units'' in \cref{fig:sd-pc-compression}(b)) must have probability $1$ since (i) leaf units correspond to $X_2$ and $X_3$ have probability $1$ while computing $\p(X_1 \!=\! x_1)$, and (ii) for any sum or product unit, if all its children have probability $1$, it also has probability $1$ following \cref{eq:pc-def}. Therefore, we do not need to evaluate these non-ancestor units. Another way to identify these non-ancestor units is by inspecting their variable scopes --- if the variable scope of a PC unit $n$ does not contain $X_1$, it must has probability $1$ while computing $\p(X_1 = x_1)$.
Finally, following all ancestors of the two blue units (\ie ``ancestor units'' in \cref{fig:sd-pc-compression}(b)), we can compute the probability of the root unit, which is the target quantity $\p(X_1 \!=\! x_1)$. At a first glance, this seems to suggest that we need to evaluate these ancestor units explicitly. Fortunately, as we will proceed to show, the root unit's probability can be equivalently computed using the blue units' probabilities weighted by a set of cached \emph{top-down probabilities}.

For ease of presentation, denote the two blue input units as $n_1$ and $n_2$, respectively. A key observation is that the probability of every ancestor unit of $\{n_1, n_2\}$ (including the root unit) can be represented as a weighted mixture over $\p_{n_1}(\x)$ and $\p_{n_2}(\x)$, the probabilities assigned to $n_1$ and $n_2$, respectively. The reason is that for each decomposable product node $m$, only distributions defined on disjoint variables shall be multiplied. Since $n_1$ and $n_2$ have the same variable scope, their distributions will not be multiplied by any product node. Following the above intuition, the top-down probability $\p_{\mathrm{down}} (n)$ of PC unit $n$ is designed to represent the ``weight'' of $n$ \wrt the probability of the root unit. Formally, $\p_{\mathrm{down}}(n)$ is defined as the sum of the probabilities of every path from $n$ to the root unit $n_r$, where the probability of a path is the product of all edge parameters traversed by it. Back to our example, using the top-down probabilities, we can compute $\p(X_1 \!=\! x_1) \!=\! \sum_{i=1}^{2} \p_{\mathrm{down}} (n_i) \cdot \p_{n_i} (x_1)$ without explicitly evaluating the ancestors of $n_1$ and $n_2$. The quantity $\p_{\mathrm{down}}(n)$ of all PC units $n$ can be computed by \cref{alg:top-down-prob} in $\bigO(\abs{\p})$ time. Specifically, the algorithm performs a top-down traversal over all PC units $n$, and updates the top-down probabilities of their children $\ch(n)$ along the process.

Therefore, we only need to compute the two PC units with scope $\{X_1\}$ in order to calculate $\p(X_1 \!=\! x_1)$. Next, when computing the second term $\p(X_1 \!=\! x_1, X_2 \!=\! x_2)$, as illustrated in \cref{fig:sd-pc-compression}(b), we can reuse the evaluated probabilities of $n_1$ and $n_2$, and similarly only need to evaluate the PC units with scope $\{X_2\}$, $\{X_2, X_3\}$, or $\{X_1, X_2, X_3\}$ (\ie nodes colored purple). The same scheme can be used when computing the third term, and we only evaluate PC units with scope $\{X_3\}$, $\{X_2, X_3\}$, or $\{X_1, X_2, X_3\}$ (\ie all red nodes). As a result, we only evaluate 20 PC units in total, compared to $3 \cdot \abs{\p} = 39$ units required by the naive approach. 

This procedure is formalized in \cref{alg:compute-marginals}, which adds additional technical details compared to \cref{alg:compute-marginals-informal}. In the main loop (lines 5-9), the $D$ terms in $F_{\pi} (\x)$ are computed one-by-one. While computing each term, we first find the PC units that need to be evaluated (line 6).\footnotemark After computing their probabilities in a bottom-up manner (line 7), we additionally use the pre-computed top-down probabilities to obtain the target marginal probability (lines 8-9).

The previous example demonstrates that even without a careful choice of variable order, we can significantly lower the computation cost by only evaluating the necessary PC units. We now show that with an \emph{optimal} choice of variable order (denoted $\pi^{*}$), the cost can be further reduced. Consider using order $\pi^{*} \!=\! (X_3, X_2, X_1)$, as shown in \cref{fig:sd-pc-compression}(c), we only need to evaluate $2 \!+\! 6 \!+\! 5 \!=\! 13$ PC units in total when running \cref{alg:compute-marginals}. This optimal variable order is the key to guaranteeing $\bigO(\log (D) \!\cdot\! \abs{\p})$ computation time. In the following, we first give a technical assumption and then proceed to justify the \emph{correctness} and \emph{efficiency} of \cref{alg:compute-marginals} when using the optimal variable order $\pi^{*}$.

\begin{figure}[t]
    \centering
    \includegraphics[width=\columnwidth]{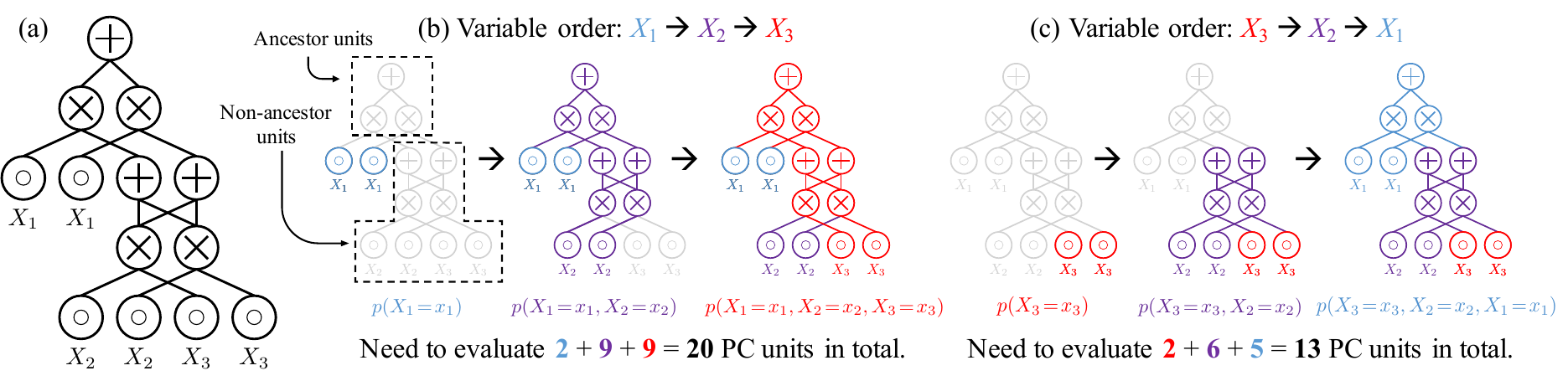}
    \vspace{-2.4em}
    \caption{Good variable orders lead to more efficient computation of $F_{\pi}(\x)$. Consider the PC $\p$ shown in (a). (b): If variable order $X_1, X_2, X_3$ is used, we need to evaluate 20 PC units in total. (c): The optimal variable order $X_3, X_2, X_1$ allows us to compute $F_{\pi}(\x)$ by only evaluating 13 PC units.}
    \label{fig:sd-pc-compression}
    \vspace{-2.0em}
\end{figure}

\begin{figure}[t]
\begin{algorithm}[H]
\caption{PC Top-down Probabilities}
\label{alg:top-down-prob}
{\fontsize{9}{9} \selectfont
\begin{algorithmic}[1]

\STATE {\bfseries Input:} A smooth and structured-decomposable PC $\p$

\STATE {\bfseries Output:} The top-down probabilities $\p_{\mathrm{down}}(n)$ of all PC units $n$

\STATE For every PC unit $n$ in $\p$, initialize $\p_{\mathrm{down}} (n) \leftarrow 0$

\ForEach
\FOR{\tikzmarknode{a1}{} unit $n$ traversed in preorder (parent before children)}

\STATE \textbf{if} $n$ is the root node of $\p$ \textbf{then} $\p_{\mathrm{down}}(n) \leftarrow 1$

\STATE \textbf{elif} \tikzmarknode{a2}{} $n$ is a sum unit \textbf{then}  \textbf{foreach} $c \in \ch(n)$ \textbf{do} $\p_{\mathrm{down}}(c) \leftarrow \p_{\mathrm{down}}(c) + \p_{\mathrm{down}}(n) \cdot \theta_{n,c}$

\STATE \textbf{elif} \tikzmarknode{a3}{} $n$ is a product unit \textbf{then} \textbf{foreach} $c \in \ch(n)$ \textbf{do} $\p_{\mathrm{down}}(c) \leftarrow \p_{\mathrm{down}}(c) + \p_{\mathrm{down}}(n)$

\ENDFOR
\ForOnly

% \STATE {\bfseries return} $\p_{\mathrm{down}} (n)$ for every PC unit $n$

\end{algorithmic}
}
\end{algorithm}
\begin{tikzpicture}[overlay,remember picture]
    \draw[black,line width=0.6pt] ([xshift=-27pt,yshift=-3pt]a1.west) -- ([xshift=-27pt,yshift=-28pt]a1.west) -- ([xshift=-23pt,yshift=-28pt]a1.west);
\end{tikzpicture}
\vspace{-4em}
\end{figure}

\begin{figure}[t]
\begin{algorithm}[H]
\caption{Compute $F_{\pi} (\x)$}
\label{alg:compute-marginals}
{\fontsize{9}{9} \selectfont
\begin{algorithmic}[1]

\STATE {\bfseries Input:} A smooth and structured-decomposable PC $\p$, variable order $\pi$, variable instantiation $\x$

\STATE {\bfseries Output:} $F_{\pi} (\x) = \{\p(x_{\pi_1}, \dots, x_{\pi_i}) \}_{i=1}^{D}$

\STATE {\bfseries Initialize:} The probability $\p(n)$ of every unit $n$ is initially set to $1$

\STATE $\p_{\mathrm{down}} \leftarrow \text{the~top-down~probability~of~every~PC~unit} n$ (\ie \cref{alg:top-down-prob}) \vspace{0.25em}

\NoDo
\FOR{\tikzmarknode{a1}{} $i = 1~\text{\textbf{to}}~D$ \textbf{do} $\,$ \# Compute the $i$th term in $F_{\pi} (\x)$: $\p(x_{\pi_{1}}, \dots, x_{\pi_{i}})$}

\STATE $\mathrm{eval}_{i} \leftarrow $ the set of PC units $n$ with scopes $\phi(n)$ that satisfy at least one of the following conditions:

\hspace{0.3em} (i) $\phi(n) \!=\! \{X_{\pi_i}\}$; $\quad$ (ii) $n$ is a sum unit and at least one child $c$ of $n$ needs evaluation, \ie $c \!\in\! \mathrm{eval}_i$;

\hspace{0.3em} (iii) $n$ is a product unit and $X_{\pi_{i}} \!\in\! \phi(n)$ and $\nexists c \!\in\! \ch(n)$ such that $\{X_{\pi_j}\!\}_{j=1}^{i} \!\in\! \phi(c)$

\STATE Evaluate PC units in $\mathrm{eval}_{i}$ in a bottom-up manner to compute $\{\p_{n}(\x) : n \!\in\! \mathrm{eval}_{i}\}$

\STATE $\mathrm{head}_{i} \leftarrow $ the set of PC units in $\mathrm{eval}_{i}$ such that none of their parents are in $\mathrm{eval}_{i}$

\STATE $\p(x_{\pi_{1}}, \dots\!, x_{\pi_{i}}) \leftarrow \sum_{n \in \mathrm{head}_{i}} \p_{\mathrm{down}} (n) \cdot \p_{n}(\x)$

\ENDFOR
\ReDo

% \STATE {\bfseries return} $\big \{\p(x_{\pi_1}, \dots, x_{\pi_i}) \big \}_{i=1}^{D}$

\end{algorithmic}
}
\end{algorithm}
\begin{tikzpicture}[overlay,remember picture]
    \draw[black,line width=0.6pt] ([xshift=-10pt,yshift=-3pt]a1.west) -- ([xshift=-10pt,yshift=-62pt]a1.west) -- ([xshift=-6pt,yshift=-62pt]a1.west);
\end{tikzpicture}
\vspace{-3em}
\end{figure}

\subsection{Proof of Theorem~\ref{thm:st-dec-pc-compress}}
\label{sec:proof-sd-pc}

As hinted by the proof sketch given in the main text, this proof consists of three main parts --- (i) construction of the optimal variable order $\pi^{*}$ given a smooth and structured-decomposable PC, (ii) justify the correctness of \cref{alg:compute-marginals}, and (iii) prove that $F_{\pi^{*}} (\x)$ can be computed by evaluating no more than $\bigO(\log (K) \!\cdot\! \abs{\p})$ PC units (\ie analyze the time complexity of \cref{alg:compute-marginals}).

\paragraph{Construction of an optimal variable order} For ease of illustration, we first transform the original smooth and structured-decomposable PC into an equivalent PC where every product node has two children. \cref{fig:binarize-pc} illustrates this transformation on any product node with more than two children. Note that this operation will not change the number of parameters in a PC, and will only incur at most $2 \!\cdot\! \abs{\p}$ edges.

We are now ready to define the variable tree (\emph{vtree}) \citep{kisa2014probabilistic} of a smooth and structured-decomposable PC. Specifically, a \emph{vtree} is a binary tree structure whose leaf nodes are labeled with a PC's input features/variables $\X$ (every leaf node is labeled with one variable). A PC conforms to a vtree if for every product unit $n$, there is a corresponding vtree node $v$ such that children of $n$ split the variable scope $\phi(n)$ in the same way as the children of the vtree node $v$. According to its definition, every smooth and structured-decomposable PC whose product units all have two children must conform to a vtree \citep{kisa2014probabilistic}. For example, the PC shown in \cref{fig:def-vtree}(a) conforms to the vtree illustrated in \cref{fig:def-vtree}(b). Similar to PCs, we define the scope $\phi(v)$ of a vtree node $v$ as the set of all descendent leaf variables of $v$.

We say that a unit $n$ in a smooth and structured-decomposable PC conforms to a node $v$ in the PC's corresponding vtree if their scopes are identical. For ease of presentation, define $\varphi(\p,v)$ as the set of PC units that conform to vtree node $v$. Additionally, we define $\varphi_{\mathrm{sum}}(\p,v)$ and $\varphi_{\mathrm{prod}}(\p,v)$ as the set of sum and product units in $\varphi(\p,v)$, respectively.

Next, we define an operation that changes a vtree into an \emph{ordered vtree}, where for each inner node $v$, its left child has more descendent leaf nodes than its right child. See \cref{fig:def-vtree}(c-d) as an example. The vtree in \cref{fig:def-vtree}(b) is transformed into an ordered vtree illustrated in \cref{fig:def-vtree}(c); the corresponding PC (\cref{fig:def-vtree}(a)) is converted into an \emph{ordered PC} (\cref{fig:def-vtree}(d)). This transformation can be performed by all smooth and structured-decomposable PCs.

We are ready to define the optimal variable order. For a pair of ordered PC and ordered vtree, the optimal variable order $\pi^{*}$ is defined as the order the leaf vtree nodes (each corresponds to a variable) are accessed following an inorder traverse of the vtree (left child accessed before right child).

\paragraph{Correctness of Algorithm~\ref{alg:compute-marginals}} Assume we have access to a smooth, structured-decomposable, and ordered PC $\p$ and its corresponding vtree. Recall from the above construction, the optimal variable order $\pi^{*}$ is the order following an inorder traverse of the vtree. 

We show that it is sufficient to only evaluate the set of PC units stated in line 6 of \cref{alg:compute-marginals}. Using our new definition of vtrees, we state line 6 in the following equivalent way. At iteration $i$ (\ie we want to compute the $i$th term in $F_{\pi}(\x)$: $\p(x_{\pi_1}, \dots, x_{\pi_{i}})$), we need to evaluate all PC units that conform to any vtree node in the set $T_{\p,i}$. Here $T_{\p,i}$ is defined as the set of vtree nodes $v$ that satisfy the following condition: $X_{\pi_i} \in \phi(v)$ and there does not exist a child $c$ of $v$ such that $\{X_{\pi_{j}}\}_{j=1}^{i} \in \phi(c)$. For ease of presentation, we refer to evaluate PC units $\varphi(\p, v)$ when we say ``evaluate a vtree node $v$''. 

First, we don't need to evaluate vtree units $v$ where $X_{\pi_i} \not\in \phi(v)$ because the probability of these PC units will be identical to that at iteration $i-1$ (\ie when computing $\p(x_{\pi_1}, \dots, x_{\pi_{i-1}})$). Therefore, we only need to cache these probabilities computed in previous iterations.

Second, we don't need to evaluate vtree units $v$ where at least one of its children $c$ satisfy $\{X_{\pi_{j}}\}_{j=1}^{i-1} \in \phi(c)$ because we can obtain the target marginal probability $\p(x_{\pi_1}, \dots, x_{\pi_i})$ following lines 7-9 of \cref{alg:compute-marginals}. We proceed to show how this is done in the following. 

Denote the ``highest'' in $T_{\p,i}$ as $v_{r,i}$ (\ie the parent of $v_{r,i}$ is not in $T_{\p,i}$). According to the variable order $\pi^{*}$, $v_{r,i}$ uniquely exist for any $i \in [D]$. According to \cref{alg:top-down-prob}, the top-down probabilities of PC units is defined as follows

\noindent $\bullet$ $\p_{\mathrm{down}}(n_r) = 1$, where $n_r$ is the PC's root unit.

\noindent $\bullet$ For any product unit $n$, $\p_{\mathrm{down}}(n) = \sum_{m\in\mathrm{par}(n)} \p_{\mathrm{down}}(m) \cdot \theta_{m, n}$, where $\mathrm{par}(n)$ is the set of parent (sum) units of $n$.

\noindent $\bullet$ For any sum unit $n$, $\p_{\mathrm{down}}(n) = \sum_{m\in\mathrm{par}(n)} \p_{\mathrm{down}}(m)$, where $\mathrm{par}(n)$ is the set of parent (product) units of $n$.

We now prove that
    \begin{align}
        \p(x_{\pi_1}, \dots, x_{\pi_i}) = \sum_{n\in\varphi_{\mathrm{sum}}(\p, v)} \p_{\mathrm{down}}(n) \cdot \p_{n}(\x) \label{eq:proof1-1}
    \end{align}
\noindent holds when $v = v_{r,i}$.

\begin{figure}[t]
    \centering
    \includegraphics[width=\columnwidth]{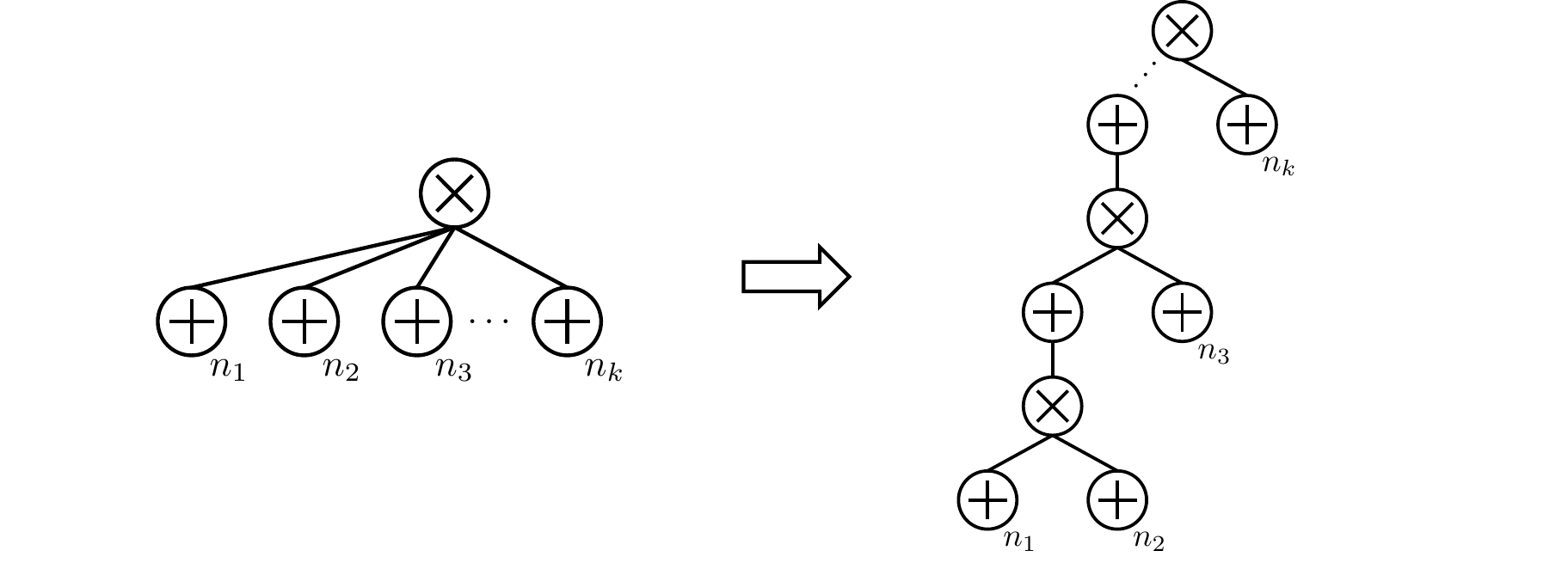}
    \vspace{-2em}
    \caption{Convert a product unit with $k$ children into an equivalent PC where every product node has two children.}
    \label{fig:binarize-pc}
\end{figure}

\begin{figure}[t]
    \centering
    \includegraphics[width=\columnwidth]{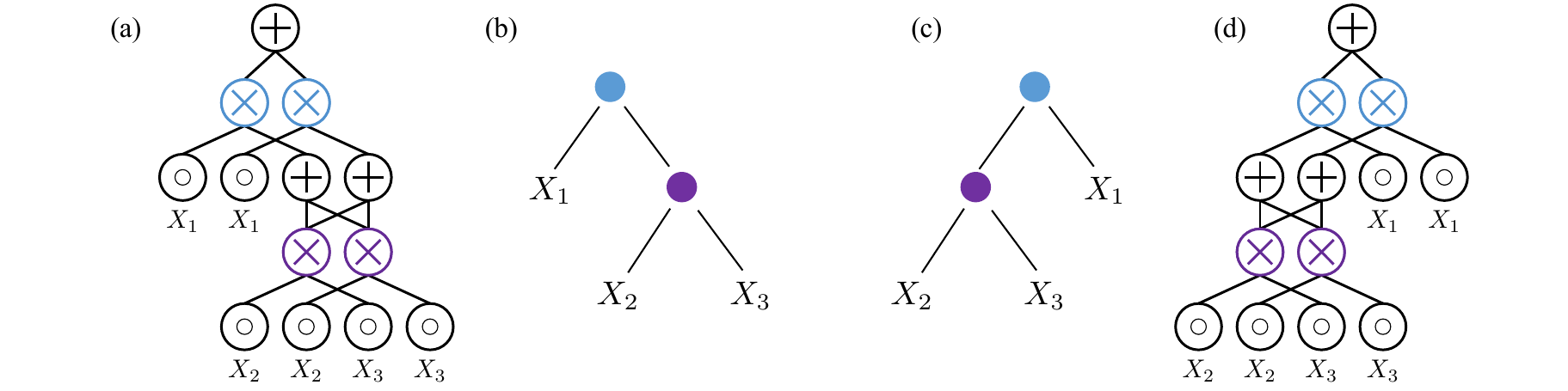}
    \caption{(a-b): An example structured-decomposable PC and a corresponding \emph{vtree}. (c-d): Converting (b) into an ordered vtree. (d) The converted ordered PC that is equivalent to (a).}
    \label{fig:def-vtree}
\end{figure}

\noindent $\bullet$ \textbf{Base case:} If $v$ is the vtree node correspond to $n_r$, then $\varphi_{\mathrm{sum}}(\p, v) = \{n_r\}$ and it is easy to verify that 
    \begin{align*}
        \p(x_{\pi_1}, \dots, x_{\pi_i}) = \p_{\mathrm{down}}(n_r) \cdot \p_{n_r}(\x) = \sum_{n\in\varphi_{\mathrm{sum}}(\p, v)} \p_{\mathrm{down}}(n) \cdot \p_{n}(\x)
    \end{align*}
\noindent $\bullet$ \textbf{Inductive case:} Suppose $v$ is an ancestor of $v_{r,i}$ and the parent vtree node $v_p$ of $v$ satisfy \cref{eq:proof1-1}. We have 
    \begin{align*}
        \p(x_{\pi_1}, \dots, x_{\pi_i}) & = \sum_{m\in\varphi_{\mathrm{sum}}(\p, v_p)} \p_{\mathrm{down}}(m) \cdot \p_{m}(\x) \\
        & = \sum_{m\in\varphi_{\mathrm{sum}}(\p, v_p)} \sum_{n\in\ch(m)} \p_{\mathrm{down}}(m) \cdot \theta_{m,n} \cdot \p_{n} (\x) \\
        & \overset{(a)}{=} \sum_{n\in\varphi_{\mathrm{prod}}(\p, v_p)} \underbrace{\sum_{m\in\mathrm{par}(n)} \p_{\mathrm{down}}(m) \cdot \theta_{m,n}}_{\p_{\mathrm{down}}(n)} \cdot\, \p_{n} (\x) \\
        & = \sum_{n\in\varphi_{\mathrm{prod}}(\p, v_p)} \p_{\mathrm{down}}(n) \cdot \p_{n} (\x) \\
        & \overset{(b)}{=} \sum_{n\in\varphi_{\mathrm{prod}}(\p, v_p)} \sum_{o\in\{o : o\in\ch(n), \{X_j\}_{j=1}^{i} \in \phi(o) \}} \p_{\mathrm{down}}(n) \cdot \p_{o} (\x) \\
        & \overset{(c)}{=} \sum_{o\in\varphi_{\mathrm{sum}}(\p, v)} \underbrace{\sum_{n\in\mathrm{par}(o)} \p_\mathrm{down}(n)}_{\p_{\mathrm{down}}(o)} \cdot\, \p_{o} (\x) \\
        & = \sum_{o\in\varphi_{\mathrm{sum}}(\p, v)} \p_{\mathrm{down}}(o) \cdot \p_{o} (\x)
    \end{align*}
\noindent where $(a)$ reorders the terms for summation; $(b)$ holds since $\forall n \in \varphi_{\mathrm{prod}}(\p, v_\p)$, $\p_{n}(\x) = \prod_{o\in\ch(n)} \p_{o}(\x)$ and $\forall o\in\ch(n)$ such that $\{X_j\}_{j=1}^{i} \cap \phi(o) = \emptyset$, $\p_{o}(\x) = 1$;\footnote{This is because the scope of these PC units does not contain any of the variables in $\{X_{\pi_j}\}_{j=1}^{i}$.} $(c)$ holds because
    \begin{align*}
        \bigcup_{n \in \varphi_{\mathrm{prod}}(\p, v_\p)} \{o : o\in\ch(n), \{X_j\}_{j=1}^{i} \in \phi(o) \} = \varphi_{\mathrm{sum}}(\p, v).
    \end{align*}

Thus, we have prove that \cref{eq:proof1-1} holds for $v = v_{r,i}$, and hence the probability $\p(x_{\pi_1}, \dots, x_{\pi_i})$ can be computed by weighting the probability of PC units $\varphi_{\mathrm{sum}}(\p, v_{r,i})$ (line 8 in \cref{alg:compute-marginals}) with the corresponding top-down probabilities (line 9 in \cref{alg:compute-marginals}).

\paragraph{Efficiency of following the optimal variable order} We proceed to show that when using the optimal variable order $\pi^{*}$, \cref{alg:compute-marginals} evaluates no more than $\bigO(\log (D) \!\cdot\! \abs{\p})$ PC units.

According to the previous paragraphs, whenever \cref{alg:compute-marginals} evaluates a PC unit $n$ \wrt vtree node $v$, it will evaluate all PC units in $\varphi(\p, v)$. Therefore, we instead count the total number of vtree nodes need to be evaluated by \cref{alg:compute-marginals}. Since the PC is assumed to be balanced \cref{defn:pc-balanced}, for every $v$, we have $\varphi(\p, v) = \bigO(\abs{\p} / D)$. Therefore, we only need to show that \cref{alg:compute-marginals} evaluates $\bigO(D \cdot \log(D))$ vtree nodes in total.

We start with the base case, which is PCs correspond to a single vtree leaf node $v$. In this case, $F_{\pi^{*}} (\x)$ boils down to computing a single marginal probability $\p(x_{\pi^{*}_{1}})$, which needs to evaluate PC units $\varphi(\p,v)$ once.

Define $f(x)$ as the number of vtree nodes need to be evaluated given a PC corresponds to a vtree node with $x$ descendent leaf nodes. From the base case we know that $f(1) \!=\! 1$.

Next, consider the inductive case where $v$ is an inner node that has $x$ descendent leaf nodes. Define the left and right child node of $v$ as $c_1$ and $c_2$, respectively. Let $c_1$ and $c_2$ have $y$ and $z$ descendent leaf nodes, respectively. We want to compute $F_{\pi^{*}} (\x)$, which can be broken down into computing two following sets of marginals:
    \begin{align*}
        \text{Set 1:} \left \{ \p( x_{\pi^{*}_{1}}, \cdots, x_{\pi^{*}_{i}}) \right \}_{i=1}^{y}, \qquad \text{Set 2:} \left \{ \p( x_{\pi^{*}_{1}}, \cdots, x_{\pi^{*}_{i}}) \right \}_{i=y+1}^{y+z}.
    \end{align*}
Since $\pi^{*}$ follows the in-order traverse of $v$, to compute the first term, we only need to evaluate $c_1$ and its descendents, that is, we need to evaluate $f(y)$ vtree nodes. This is because the marginal probabilities in set 1 are only defined on variables in $\phi(c_1)$. To compute the second term, in addition to evaluating PC units corresponding to $c_2$ (that is $f(z)$ vtree nodes in total),\footnote{As justified in the second part of this proof, all probabilities of PC units that conform to descendents of $c_1$ will be unchanged when computing the marginals in set 2. Hence we only need to cache these probabilities.} we also need to re-evaluate the PC units $\varphi(\p, v)$ every time, which means we need to evaluate $z$ more vtree nodes. In summary, we need to evaluate
    \begin{align*}
        f(x) = f(y) + f(z) + z \quad (y \geq z, y + z = x)
    \end{align*}
\noindent vtree nodes.

To complete the proof, we upper bound the number of vtree nodes need to be evaluated. Define $g(\cdot)$ as follows:
    \begin{align*}
        g(x) = \max_{y \in \{1, \dots, \lfloor \frac{x}{2} \rfloor \}} y + g(y) + g(x-y).
    \end{align*}
It is not hard to verify that $\forall x \!\in\! \mathbb{Z}, g(x) \geq f(x)$. Next, we prove that
    \begin{align*}
        \forall x \in \mathbb{Z} \,(x \geq 2),\; g(x) \leq 3 x \log x.
    \end{align*}
First, we can directly verify that $g(2) \leq 3 \!\cdot\! 2 \log_{2} 2 \approx 4.1$. Next, for $x \geq 3$,
    \begin{align*}
        g(x) & = \max_{y \in \{1, \dots, \lfloor \frac{x}{2} \rfloor \}} y + g(y) + g(x - y) \\
        & \leq \max_{y \in \{1, \dots, \lfloor \frac{x}{2} \rfloor \}} \underbrace{y + 3 y \log y + 3 (x-y) \log (x-y)}_{h(y)} \\
        & \overset{(a)}{\leq} \max \bigg ( 1 + 3 (x-1) \log (x-1), \left \lfloor \frac{x}{2} \right \rfloor + 3 \left \lfloor \frac{x}{2} \right \rfloor \log \left \lfloor \frac{x}{2} \right \rfloor + 3 \left (x - \left \lfloor \frac{x}{2} \right \rfloor \right ) \log \left (x - \left \lfloor \frac{x}{2} \right \rfloor \right ) \bigg ) \\
        & \leq \max \bigg ( 1 + 3 (x-1) \log (x-1), \left \lfloor \frac{x}{2} \right \rfloor + 3 (x+1) \log \frac{x+1}{2} \bigg ) \\
        & \leq 3 x \log x,
    \end{align*}
\noindent where $(a)$ holds since according to its derivative, $h(y)$ obtains its maximum value at either $y=1$ or $y=\left \lfloor \frac{x}{2} \right \rfloor$.

For a structured-decomposable PC with $D$ variables, $g(D) \leq 3 D \log D$ vtree nodes need to be evaluated. Since each vtree node corresponds to $\bigO(\frac{\abs{\p}}{D})$ PC units, we need to evaluate $\bigO(\log (D) \!\cdot\! \abs{\p})$ PC units to compute $F_{\pi^{*}} (\x)$.

\subsection{HCLTs, EiNets, and RAT-SPNs are Balanced}
\label{sec:pc-balanced}

Consider the compilation from a PGM to an HCLT (\cref{sec:hclt}). We first note that each PGM node $g$ uniquely corresponds to a variable scope $\phi$ of the PC. That is, all PC units correspond to $g$ have the same variable scope. Please first refer to \cref{sec:gen-pc-hclt} for details on how to generate a HCLT given its PGM representation. 

In the main loop of \cref{alg:compile-hclt} (lines 5-10), for each PGM node $g$ such that $\mathrm{var}(g) \in \Z$, the number of computed PC units are the same ($M$ product units compiled in line 9 and $M$ sum units compiled in line 10). Therefore, for any variable scopes $\phi_{1}$ and $\phi_{2}$ possessed by some PC units, we have $\abs{\mathrm{nodes}(\p, \phi(m))} \approx \abs{\mathrm{nodes}(\p, \phi(n))}$. Since there are in total $\Theta(D)$ different variable scopes in $\p$, we have: for any scope $\phi'$ exists in an HCLT $\p$, $\mathrm{nodes}(\p, \phi') = \bigO(\abs{\p} / D)$.

EiNets and RAT-SPNs are also balanced since they also have an equivalent PGM representation of their PCs. The main difference between these models and HCLTs is the different variable splitting strategy in the product units.

\section{Methods and Experiment Details}

\subsection{Learning HCLTs}
\label{sec:hclt-learning-details}

\paragraph{Computing Mutual Information} As mentioned in the main text, computing the pairwise mutual information between variables $\X$ is the first step to compute the Chow-Liu Tree. Since we are dealing with categorical data (\eg 0-255 for pixels), we compute mutual information by following its definition:
    \begin{align*}
        I(X; Y) = \sum_{i=1}^{C_{X}} \sum_{j=1}^{C_{Y}} P(X=i, Y=j) \log_{2} \frac{P(X=i, Y=j)}{P(X=i) P(Y=j)},
    \end{align*}
\noindent where $C_X$ and $C_Y$ are the number of categories for variables $X$ and $Y$, respectively. To lower the computation cost, for image data, we truncate the data by only using $3$ most-significant bits. That is, we treat the variables as categorical variables with $2^3=8$ categories during the construction of the CLT. Note that we use the full data when constructing/learning the PC.

\paragraph{Training pipeline} We adopt two types of EM updates --- mini-batch and full-batch. In mini-batch EM, parameters are updated according to a step size $\eta$: $\params^{(k+1)} \!\leftarrow\! (1 \!-\! \eta) \params^{(k)} \!+\! \eta \params^{\text{(new)}}$, where $\params^{\text{(new)}}$ is the EM target computed with a batch of samples; full-batch EM updates the parameters by the EM target computed using the whole dataset. In this paper, HCLTs are trained by first running mini-batch EM with batch size 1024 and $\eta$ changing linearly from $0.1$ to $0.05$; full-batch EM is then used to finetune the parameters.

\subsection{Generating PCs Following the HCLT Structure}
\label{sec:gen-pc-hclt}

After generating the PGM representation of a HCLT model, we are now left with the final step of compiling the PGM representation of the model into an equivalent PC. Recall that we define the latent variables $\{Z_i\}_{i=1}^{4}$ as categorical variables with $M$ categories, where $M$ is a hyperparameter. As demonstrated in \cref{alg:compile-hclt}, we incrementally compile every PGM node into an equivalent PC unit though a bottom-up traverse (line 5) of the PGM. Specifically, leaf PGM nodes corresponding to observed variables $X_i$ are compiled into PC input units of $X_i$ (line 6), and inner PGM nodes corresponding to latent variables are compiled by taking products and sums (implemented by product and sum units) of its child nodes' PC units (lines 8-10). Leaf units generated by $\mathbf{PC\_leaf}(X)$ can be any simple univariate distribution of $X$. We used categorical leaf units in our HCLT experiments. \cref{fig:hclt}(d) demonstrates the result PC after running \cref{alg:compile-hclt} with the PGM in \cref{fig:hclt}(c) and $M=2$.

\begin{figure}[t]
\begin{algorithm}[H]
\caption{Compile the PGM representation of a HCLT into an equivalent PC}
\label{alg:compile-hclt}
{\fontsize{9}{9} \selectfont
\begin{algorithmic}[1]

\STATE {\bfseries Input:} A PGM representation of a HCLT $\G$ (\eg \cref{fig:hclt}(c)); hyperparameter $M$

\STATE {\bfseries Output:} A smooth and structured-decomposable PC $\p$ equivalent to $\G$

\STATE {\bfseries Initialize:} $\mathrm{cache} \leftarrow \mathrm{dict}()$ a dictionary storing intermediate PC units \vspace{0.4em}

\STATE {\bfseries Sub-routines:} $\text{\textbf{PC\_leaf}}(X_i)$ returns a PC input unit of variable $X_i$; $\text{\textbf{PC\_prod}}(\{n_i\}_{i=1}^{m})$ (resp. $\text{\textbf{PC\_sum}}(\{n_i\}_{i=1}^{m})$) returns a product (resp. sum) unit over child nodes $\{n_i\}_{i=1}^{m}$. \vspace{0.4em}

\ForEach
\FOR{\tikzmarknode{a1}{} node $g$ traversed in postorder (bottom-up) of $\G$}

\STATE \textbf{if} $\mathrm{var}(g) \in \X$ \textbf{then} $\mathrm{cache}[g] \leftarrow \big [\text{\textbf{PC\_leaf}} \big (\mathrm{var}(g) \big ) \mathrm{~for~} i = 1 : M \big ]$

\STATE \textbf{else} \tikzmarknode{a2}{} $\# \text{~That is,} \,\mathrm{var}(g) \in \Z$

\STATE \hspace{1em} $\mathrm{chs\_cache} \leftarrow \big [\mathrm{cache}[c] \mathrm{~for~} c \text{~in~} \mathrm{children}(g) \big ]$ $\; \# \,\mathrm{children}(g)$ is the set of children of $g$

\STATE \hspace{1em} $\mathrm{prod\_nodes} \leftarrow \big [\text{\textbf{PC\_prod}} \big ( \big [\mathrm{nodes}[i] \mathrm{~for~} \mathrm{nodes} \text{~in~} \mathrm{chs\_cache} \big ] \big ) \mathrm{~for~} i = 1 : M \big ]$

\STATE \hspace{1em} $\mathrm{cache}[g] \leftarrow \big [ \text{\textbf{PC\_sum}} \big ( \mathrm{prod\_nodes} \big ) \mathrm{~for~} i = 1 : M \big ]$ \vspace{0.4em}

\ENDFOR

\STATE {\bfseries return} $\mathrm{cache}[\mathrm{root}(\G)][0]$

\end{algorithmic}
}
\end{algorithm}
\begin{tikzpicture}[overlay,remember picture]
    \draw[black,line width=0.6pt] ([xshift=-27pt,yshift=-3pt]a1.west) -- ([xshift=-27pt,yshift=-58pt]a1.west) -- ([xshift=-23pt,yshift=-58pt]a1.west);
    \draw[black,line width=0.6pt] ([xshift=-12pt,yshift=-3pt]a2.west) -- ([xshift=-12pt,yshift=-34pt]a2.west) -- ([xshift=-8pt,yshift=-34pt]a2.west);
\end{tikzpicture}
\vspace{-2em}
\end{figure}

\subsection{Implementation Details of the PC Learning Algorithm}
\label{sec:impl-details}

We adopted the EM parameter learning algorithm introduced in \citet{ChoiAAAI21}, which computes the EM update targets using \emph{expected flows}. Following \citet{liu2021tractable}, we use a hybrid EM algorithm, which uses mini-batch EM updates to initiate the training process, and switch to full-batch EM updates afterwards. 

\noindent $\bullet$ Mini-batch EM: denote $\params^{\mathrm{(EM)}}$ as the EM update target computed with a mini-batch of samples. An update with step-size $\eta$ is: $\params^{(k+1)} \leftarrow (1-\eta) \params^{(k)} + \eta \params^{\mathrm{(EM)}}$.

\noindent $\bullet$ Full-batch EM: denote $\params^{\mathrm{(EM)}}$ as the EM update target computed with the whole dataset. Full-batch EM updates the parameters with $\params^{\mathrm{(EM)}}$ at each iteration.

In our experiments, we trained the HCLTs with 100 mini-batch EM epochs and 20 full-batch EM epochs. During mini-batch EM updates, $\eta$ was annealed linearly from $0.15$ to $0.05$.

\subsection{Details of the Compression/Decompression Experiment}
\label{sec:cmp-decmp-exp-details}

\paragraph{Hardware specifications} All experiments are performed on a server with 72 CPUs, 512G Memory, and 2 TITAN RTX GPUs. In all experiments, we only use a single GPU on the server.

\paragraph{IDF} We ran all experiments with the code in the GitHub repo provided by the authors. We adopted an IDF model with the following hyperparameters: 8 flow layers per level; 2 levels; densenets with depth 6 and 512 channels; base learning rate $0.001$; learning rate decay $0.999$. The algorithm adopts an CPU-based entropy coder rANS. For (de)compression, we used the following script: \url{https://github.com/jornpeters/integer_discrete_flows/blob/master/experiment_coding.py}.

\paragraph{BitSwap} We trained all models using the following author-provided script: \url{https://github.com/fhkingma/bitswap/blob/master/model/mnist_train.py}. The algorithm adopts an CPU-based entropy coder rANS. And we used the following code for (de)compression: \url{https://github.com/fhkingma/bitswap/blob/master/mnist_compress.py}.

\paragraph{BB-ANS} All experiments were performed using the following official code: \url{https://github.com/bits-back/bits-back}.

\subsection{Details of the PC+IDF Model}
\label{sec:pc-idf-details}

The adopted IDF architecture follows the original paper \citep{hoogeboom2019integer}. For the PCs, we adopted EiNets \citep{peharz2020einsum} with hyperparameters $K=12$ and $R=4$. Instead of using random binary trees to define the model architecture, we used binary trees where ``closer'' latent variables in $\z$ will be put closer in the binary tree.

Parameter learning was performed by the following steps. First, compute the average log-likelihood over a mini-batch of samples. The negative average log-likelihood is the loss we use. Second, compute the gradients \wrt all model parameters by backpropagating the loss. Finally, update the IDF and PCs using the gradients individually: for IDF, following \citet{hoogeboom2019integer}, the Adamax optimizer was used; for PCs, following \citet{peharz2020einsum}, we use the gradients to compute the EM target of the parameters and performed mini-batch EM updates.

\end{document}